\definecolor{FTChampagnePink}{HTML}{F2DFCE}
\definecolor{FTOldLace}{HTML}{FFF1E0}
\definecolor{FTFloralWhite}{HTML}{FFF9F5}
\definecolor{FTMetallicSeaweed}{HTML}{0D7680}
\definecolor{FTMaroon}{HTML}{8F223A}
\definecolor{BBG}{HTML}{274C43}
\newcommand{\globalcolor}[1]{%
  \color{#1}\global\let\default@color\current@color
}
\renewcommand{\mathbf}{\mathbold}
\theoremstyle{plain}
  \newtheorem{theorem}{Theorem}[section]
  \newtheorem{lemma}[theorem]{Lemma}
  \newtheorem{fact}[theorem]{Fact}
  \newtheorem{proposition}[theorem]{Proposition}
\theoremstyle{definition}
  \newtheorem{definition}[theorem]{Definition}
  \newtheorem{ex}[theorem]{Example}
  \newtheorem{remark}[theorem]{Remark}
  \newenvironment{example}{\begin{ex}}{\end{ex}}
  \newcommand{\sett}[2]{\qty{#1 \ \middle | \ #2}}
\newcommand{\C}{\mathbb{C}}
  \newcommand{\R}{\mathbb{R}}
  \newcommand{\manif}{\mathfrak{M}}
	\newcommand{\inv}[1]{{{#1}^{-1}}}
	\newcommand{\X}{\mathfrak{X}}
	\newcommand{\Fc}[1]{\Fc[#1]} 
\newcommand{\innerprod}[2]{\expval{{#1}, {#2}}}
\newcommand{\fo}[1]{#1^{(1)}}
\newcommand{\Fourier}{\mathcal{F}}
\newcommand{\Power}{\mathcal{P}}
\DeclareMathOperator*{\spec}{\mathrm{spec}}
\newcommand{\openball}[2]{B(#1,#2)}
\title{Power Spectrum Signatures of Graphs}
\author{Karamatou Yacoubou Djima$^\dag$}
\author{Ka Man Yim$^\ddag$}
\address{$^\dag$Mathematics \& Statistics, Wellesley College, 106 Central St, Wellesley, MA, 02481, USA}
\address{$^\ddag$School of Mathematics, Cardiff University, Cardiff, CF24 4AG, United Kingdom}
\email{ky105@wellesley.edu,yimkm@cardiff.ac.uk}
\begin{document}

\begin{abstract}

Point signatures based on the Laplacian operators on graphs, point clouds, and manifolds have become popular tools in machine learning for graphs, clustering, and shape analysis. In this work, we propose a novel point signature, the power spectrum signature, a measure on $\mathbb{R}$ defined as the squared graph Fourier transform of a graph signal. Unlike eigenvectors of the Laplacian from which it is derived, the power spectrum signature is invariant under graph automorphisms. We show that the power spectrum signature is stable under perturbations of the input graph with respect to the Wasserstein metric. We focus on the signature applied to classes of indicator functions, and its applications to generating descriptive features for vertices of graphs. To demonstrate the practical value of our signature, we showcase several applications in characterizing geometry and symmetries in point cloud data, and graph regression problems.     
\end{abstract}

\maketitle

{\small{\it Keywords:} Power spectrum, Point signatures, Graph Laplacian, Wasserstein distance, Diffusion kernels, Manifold learning, Graph regression}


\section{Introduction}

In many data applications such as shape analysis or the study of biological or transportation networks, graphs are increasingly preferred over Euclidean spaces as they provide a simple, natural way to store information about a network of objects. Extracting the topological features of these graphs and classifying them is important, for example, in predicting the chemical properties of compounds using their molecular structure. Since the atoms that compose proteins and other molecules are defined only up to rigid motion--they have no fixed orientation--the geometry of a molecule may be more effectively represented by the pairwise distances between atoms rather than their spatial positions. Objects or data with such structures abound in applications such as shape classification and retrieval, where the shape data often come in a discrete mesh (graph) representation, and two shapes are deemed similar if rigid or isometric transformations can map one onto the other. Graph models, with the graph domain sometimes built from manifold discretization, have also been used in applications such as dimension reduction and clustering. The primary challenge in graph classification lies in the lack of the spatial and geometric intuition inherent to Euclidean spaces, making it difficult to define and compute meaningful measures of distance or similarity. Therefore, tackling graph classification requires methods that extend beyond traditional machine learning techniques designed for Euclidean data. 

Considerable research has focused on approaches involving graph kernel methods \cite{DBLP:journals/corr/abs-1903-11835,JMLR:v11:vishwanathan10a}. The crux of these approaches is the graph Laplacian, whose eigenvectors and eigenvalues encode intrinsic topological
and geometric information about graphs. Using the Laplacian eigendecomposition, these techniques embed graphs into Euclidean spaces and use these embeddings for classification and other tasks, often using neural networks. For example, graph kernels are involved in designing point signatures, i.e., vertex features such as the heat kernel signature \cite{Sun2009ADiffusion}, which are invariant under rigid or isometric transformations, and can be used for classification or shape matching. In~\cite{LiChunyuan2013Amdf}, the authors propose a graph's spectral wavelet signature that allows the analysis and design
of efficient shape signatures for nonrigid 3D shape retrieval. In~\cite{Carriere2020Perslay:Signatures,Yim_2021}, the authors introduce a framework that combines the advantages of kernel signatures and persistent homology to produce a persistence-based graph classifier. In Laplacian eigenmaps~\cite{belkinniyogiLA2023}, diffusion maps~\cite{DBLP:journals/corr/KipfW16}, and other kernel methods~\cite{ham2004kernel,filippone2008survey}), the authors design manifold learning techniques mainly for dimension reduction and clustering. A notable advantage of diffusion maps, a manifold-learning technique introduced in~\cite{Coifman2006DiffusionMaps}, is the formulation of a framework that bridges classical harmonic analysis with its counterparts on graphs and manifolds. This framework has been further developed in related works, leading to methods such as diffusion wavelets~\cite{coifmanmaggion2006} and diffusion wavelet packets \cite{BREMER200695}. 
Continuing this line of connecting the ``old'' to the ``new'', \cite{RICAUD2019474} provides a comprehensive overview of the state of the art in graph signal processing, with a particular emphasis on defining, interpreting, and applying the Fourier transform~\cite{DBLP:journals/corr/abs-1211-0053} to data on graphs. Diffusion processes and harmonic analysis on graphs can be seen as one of the foundations for graph neural networks (GNNs) and geometric deep learning. For example, architectures such as \cite{DBLP:journals/corr/KipfW16,DBLP:journals/corr/abs-1904-07785} incorporate the graph Laplacian in the message passing routines. Even more generally, the paradigm of graph neural network architectures are essentially non-linear convolutions of graph signals, analogous to how convolutional neural networks draw on harmonic analysis on Euclidean domains~\cite{Bronstein2021GeometricGauges}.   

Another framework with increasing popularity in machine learning comes from methods based on optimal transport \cite{Santambrogio2015OptimalMathematicians}, first defined as a way to compare probability distributions defined over either the same ground
space or multiple pre-registered ground spaces. In practice, these methods have enabled the construction of precise generative models for signal intensities and other data distributions, making them valuable in various applications, including statistical machine learning and image processing \cite{Kolourietal2017}. The Wasserstein distance or Kantorovich-Rubenstein metric is a transport-based distance function defined between probability distributions on some metric space. This distance has gained much traction due to its geometric characteristics that inspired new methods for matching and interpreting the meaning of data distributions.

\subsection{Contribution and Organization}

In this paper, we propose a novel set of vertex features derived from the graph Laplacian spectrum and eigenvectors. We demonstrate how these new signatures can be applied to characterize geometric features associated to points on point clouds and graphs and create informative features of vertices for machine learning problems for graphs. We demonstrate the stability of these features under perturbations of the input graph or point cloud. 

In~\Cref{sec:powerspectrumsignaturesintro}, we formally define this signature and establish its key properties. Specifically, in~\Cref{subsec:defnsProp}, we demonstrate that the power spectrum signature
evaluated on vertices is invariant under automorphisms of graphs. In \Cref{subsec:connections}, we discuss the relationship between other signatures and ours, with a particular focus on their perturbation properties.

Then, in~\Cref{sec:perturbationtheory}, we show that our approach provides a stable characterization of functions on graphs under perturbations of the graph Laplacian. Considering power spectrum signatures as elements of 1-Wasserstein space, we derive in \Cref{thm:Lipschiz} an explicit bound on the variation of the signature in terms of the size of the perturbation to the Laplacian. Our precise perturbation theory analysis result stands out because it holds with minimal assumptions, even in the presence of spectral degeneracies. %

Finally, we validate our theoretical results through computational experiments on real and synthetic data given in~\Cref{sec:applications}. These experiments demonstrate that power spectrum signatures of vertices add valuable features that improve the performance of graph neural network architectures in benchmark graph regression problems. Using point cloud data, we also demonstrate how the signature can identify points that are related by approximate global isometries of the point cloud, and distinguish points with different local geometries.


\section{Power Spectrum Signatures}\label{sec:powerspectrumsignaturesintro}

\subsection{Preliminaries and Notation}\label{subsec:PreliminariesNotation}

The power spectrum signature of a graph is constructed using its Laplacian operator. In this section, we present some basic properties of the graph Laplacian required to define this signature. Let $G = (V, E)$ be a simple, undirected finite graph with $n = |V|$ nodes. The graph {\it adjacency matrix} $A$, is a symmetric matrix that describes the connection between nodes $i$ and $j$, such that $A_{i,j} = 1$ if there is an edge between $i$ and $j$ and $A_{i,j} = 0$ otherwise. Note that the adjacency matrix can also be more generally defined as $A_{i,j} = w(i,j)$, where $w$ is a weight associated with the edge $(i,j) \in E$. Often, the weights come from a symmetric, positive semi-definite kernel function $k(i,j)$ that encodes some notion of local geometry that can be application-dependent. 
Given $A$, we can define the diagonal {\it degree matrix} $D$ with $D_{i,i} = \sum_{j} A_{i,j}$. The {\it graph Laplacian} matrix $L$ of $G(V,E)$ is then defined as $L = D - A$ and the {\it normalized Laplacian} matrix is $L = I- D^{-1/2}AD^{-1/2}$, where $I$ is the $n \times n$ identity matrix. 

Throughout this work, we will use the normalized Laplacian. Since $L$ is a real symmetric matrix, we know by the spectral theorem that it has a complete set of orthonormal eigenvectors $\phi_i$ corresponding to a set of real, non-negative eigenvalues $\lambda_i$. In particular, the eigenvalues of $L$ are confined to the interval $[0,\,2]$.

A graph or vertex function $f: V \to \R$ can be defined on $G = (V, E)$ by assigning a real value to each vertex. We use the notation $f(x)$ or $f_x$ to designate the value of $f$ at the vertex $x \in V$. When an ordering of the vertices is fixed, this function can be regarded as a vector $f \in \R^n$. We use $\delta_x: V \to \R$ to denote the indicator function on vertex $x \in V$.  

\subsection{Definitions \& Properties}\label{subsec:defnsProp}

Consider the simple, undirected finite graph $G = (V, E)$, $n = |V|$, with a normalized graph Laplacian $L$ that encodes some notion of similarity between the points on the graph as explained above. We assume that $L$ has an eigendecomposition of the form 
\[ 
    L = \sum_{i = 1}^{n} \lambda_i \phi_i^{\phantom{}} \phi_i^\intercal. 
\]
Then for the choice of eigenbasis $\qty{\phi_i}_{i = 1}^n$ in the formula above, the graph Fourier transform (GFT) $\Fourier: \R^n \to M(\R)$, defined as 
\begin{equation}\label{eq:Fourier}
    \mathcal{F} \qty[f](t) = \sum_{i = 1}^{n} \delta(t - \lambda_i) \innerprod{\phi_i}{f},
\end{equation}
sends a vertex function $f \in \R^{n}$ to a discrete, signed measure on $[0,2]$, the support of the eigenvalues. The GFT is usually expressed more plainly in terms of a single inner product, but we choose the above notation to stress the measure-theoretic perspective of our work. 

Note that the Fourier transform is dependent on a particular choice of basis. For example, if we changed the sign of each eigenvector, there would be a corresponding sign change in the Fourier transform. In addition, different choices of eigenbases in degenerate eigenspaces, i.e., eigenspaces corresponding to eigenvalues with multiplicity greater than one, would also change $\mathcal{F}$. However, what \emph{is} invariant to our choice of basis is the power spectrum of the Fourier transform. We now introduce it formally as our spectral signature.

\begin{definition}[{\emph{Power Spectrum Signature}}]
    Let $H$ be an $n \times n$ Hermitian matrix, and suppose we are given its orthonormal eigenbasis $\phi_1,\,\hdots,\,\phi_{n}$, with corresponding eigenvalues $\lambda_1,\,\,\hdots,\,\lambda_{n}$. The  {\it power spectrum} $\Power: \R^n \to \R$ maps a function $f \in \R^{n}$ with unit norm $\norm{f} = 1$ to a discrete probability measure on $\R$, and is defined as
    \begin{align}
        \label{eqn:powerspectrum}
        \Power \qty[f](t) 
        &= \vert \mathcal{F} \qty[f](t) \vert^2 \nonumber \\
        &= \sum_{i = 1}^n \delta(t - \lambda_i) \innerprod{\phi_i}{f}^2 \\
        &=:\mu^H_f, \nonumber
    \end{align} 
    with the notation $\mu^H_f$ used in probabilistic settings. In particular, if $H$ is a graph Laplacian, we say that $\Power \qty[f]$ is the {\it power spectrum signature} of the vertex function $f: V \to \R$.
\end{definition}
\begin{example}[A power spectrum computation]
    Consider the vertex function $f = \delta_x + \delta_y$ defined on a graph $G(V, E)$.  
    The power spectrum of $f$ can be calculated as
    \[
        \Power\qty[f](t) = \sum\limits_{i = 1}^{n} \delta(t - \lambda_i)[\phi_i^2(x) + 2\phi_i(x)\phi_i(y)+ \phi_i^2(y)].
    \]
\end{example}
Power spectra describe the distribution of power among the frequency components that make up a signal $f$. This information is crucial in many applications, making power spectra widely used in real-world scenarios to analyze the frequency content of signals (in our case, the vertex functions), highlight dominant frequencies, or characterize noise levels. For example, spectral analysis can identify the pitch and tone of a musical instrument. Furthermore, noise levels often correspond to higher frequencies in the power spectrum and can be reduced by filtering out those frequencies from the signal. Because they represent essential information on important dynamics, power spectra constitute a natural and compelling characterization of systems. 

\begin{remark}
        Unlike the Fourier transform, the power spectrum signature of a graph is independent of the choice of orthonormal eigenbasis for the Laplacian because, for a given eigenspace, the sum $\innerprod{\phi_i}{f}^2$ over the orthonormal eigenbasis $\{ \phi_i\}_{i = 1}^{n}$ of the eigenspace is simply the norm of the projection of $f$ onto that eigenspace, which is independent of the choice of eigenbasis. If we let $\spec(H)$ be the \emph{set} of unique eigenvalues of a Hermitian matrix $H$, and $P_\lambda \in \C^{n \times n}$ be the projection matrix onto the (potentially degenerate) eigenspace of $\lambda \in \spec(H)$, then we can rewrite $\Power[f](t)$ as  
\begin{equation}\label{eq:probmeasureunderHerm}
    \Power[f](t) = \sum\limits_{\lambda \in \spec(H)}\langle f, P_\lambda f \rangle \delta(t - \lambda).
    \end{equation}
    This expression explicitly relates our power spectrum signature with the \emph{spectral invariants} considered by~\cite{Furer2010OnInvariants}, which consider collections of projection operators onto eigenspaces along with the eigenvalues as graph invariants. In particular, we note the results in ~\cite{Rattan2023Weisfeiler-LemanSpectra} which establish the strength of spectral invariants as graph isomorphism tests within the Weisfeiler-Lehman hierarchy. 
\end{remark}

In this work, we focus on the power spectrum of indicator functions $\delta_x: V \to \R$ on $V$, and use them as features of the vertex $x$. So we will sometimes refer to the power spectrum signature $\Power[\delta_x]$ as the \emph{spectral feature} of vertex $x$. We will also use $\mu_x^H := \Power[\delta_x]$ for spectral features at vertex $x$. \Cref{fig:vertexfcnPS} shows a graph function and the associated power spectrum.

\begin{figure}[!htbp]
    \centering
    \includegraphics[width=0.5\linewidth]{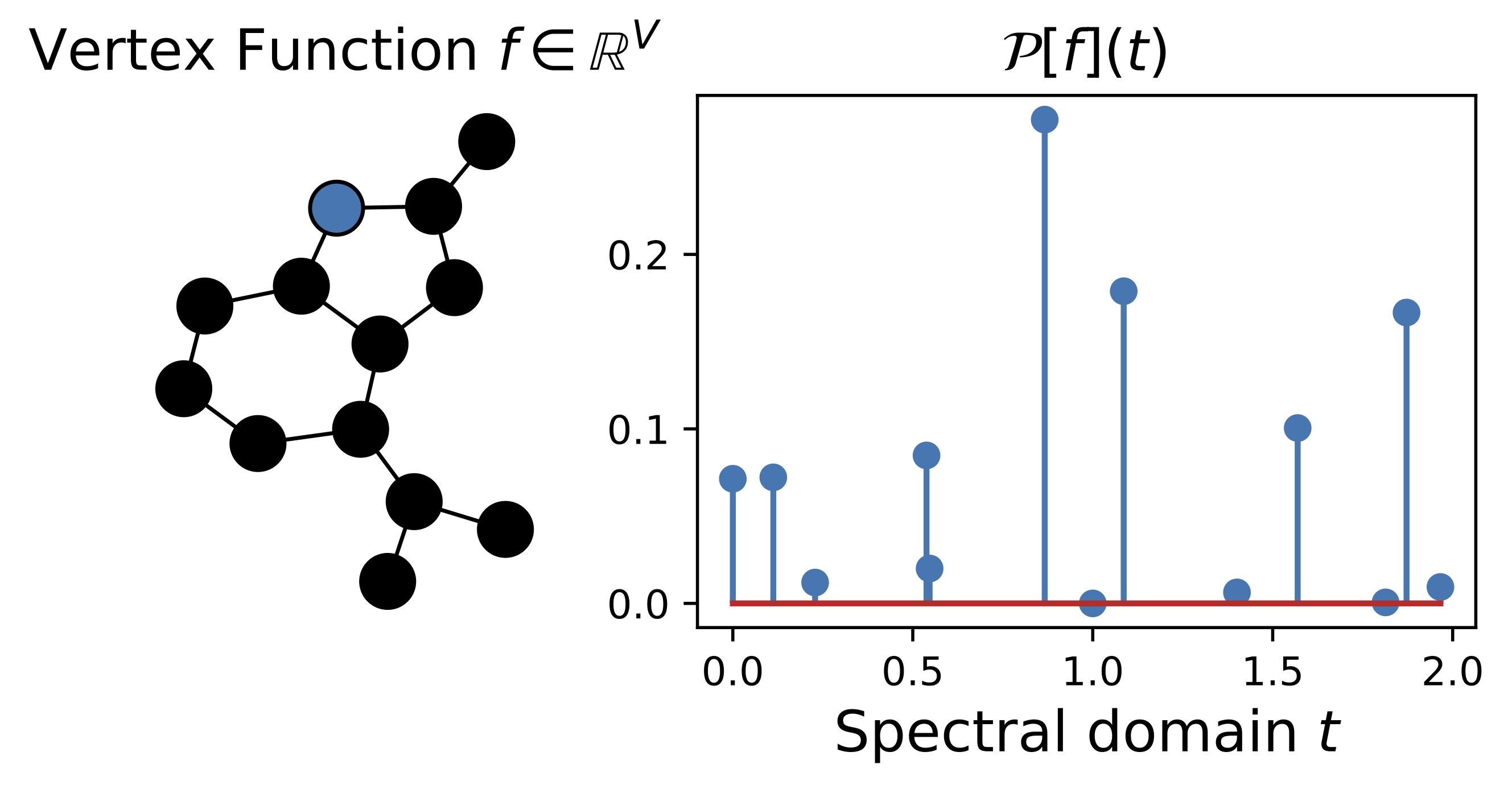}\vspace{-.5cm}
    \caption{The power spectrum of the vertex function supported on the blue vertex.}
    \label{fig:vertexfcnPS}
\end{figure}

Next, we show how a graph can be described completely by finitely many power spectra. 

\begin{proposition}[Injectivity] \label{prop:injectivity} Let $f_{x,y}$ denote 
\begin{equation} \label{eq:pair_indicators}
    f_{x,y} = 
     \begin{cases}
        \frac{1}{\sqrt{2}}\qty(\delta_{x} + \delta_{y}), & x \neq y, \\
        \delta_{x}, & x =y.
    \end{cases}
\end{equation}
Given the power spectra $\qty(\Power\qty[f_{x,y}])_{\qty{x,y} \subset V}$  with respect to a Hermitian matrix $H$, we can recover the original Hermitian matrix via the formulae 
\begin{equation}
     H_{xy} =
     \begin{cases}
     \int t \Power\qty[f_{x,y}]) \dd{t}  - \frac{1}{2} \qty(\int t \Power\qty[f_{x}]\dd{t} + \int t \Power\qty[f_{y}]\dd{t}), & x \neq y, \\
     \int t \Power\qty[f_{x}]) \dd{t}, & \text{o.w.}.         
     \end{cases}
\end{equation}
\end{proposition}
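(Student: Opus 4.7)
The plan is to observe that the first moment of the power spectrum measure recovers a quadratic form in $H$, and then apply this to the test vectors $f_{x,y}$ to extract the matrix entries one at a time.

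First I would compute, directly from the definition in \eqref{eqn:powerspectrum}, that for any unit vector $f \in \R^n$,
\begin{equation*}
    \int_\R t \, \Power[f](t) \, \dd{t} = \sum_{i=1}^n \lambda_i \innerprod{\phi_i}{f}^2 = \innerprod{f}{H f},
\end{equation*}
where the last equality uses the spectral decomposition $H = \sum_i \lambda_i \phi_i \phi_i^\intercal$ and orthonormality of $\{\phi_i\}$. This is the only analytic input to the argument; everything else is a linear-algebra manipulation.

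Next I would plug in the specific test functions. Taking $f = f_{x,x} = \delta_x$ immediately yields
\begin{equation*}
    \int t \, \Power[f_x](t) \, \dd{t} = \innerprod{\delta_x}{H \delta_x} = H_{xx},
\end{equation*}
which is the second case of the claimed formula. For $x \neq y$, applying the identity to $f_{x,y} = \tfrac{1}{\sqrt 2}(\delta_x + \delta_y)$ and expanding gives
\begin{equation*}
    \int t \, \Power[f_{x,y}](t) \, \dd{t}
    = \tfrac{1}{2}\bigl(H_{xx} + H_{yy} + H_{xy} + H_{yx}\bigr)
    = \tfrac{1}{2}(H_{xx} + H_{yy}) + H_{xy},
\end{equation*}
where the last step uses that $H_{xy} = H_{yx}$ for the real symmetric matrices relevant here (the graph Laplacian); rearranging and substituting the previously computed expressions for $H_{xx}$ and $H_{yy}$ yields the first case of the claimed formula.

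I do not expect any serious obstacle: the whole argument is a one-line polarization identity once the moment formula $\int t \, \Power[f](t)\dd{t} = \innerprod{f}{Hf}$ is in hand. The only subtle point worth flagging explicitly in the write-up is that the stated formula only recovers the real part of $H_{xy}$, so the proposition is genuinely about real symmetric $H$ (as in the Laplacian case of interest); recovering imaginary parts of a general Hermitian $H$ would require an additional family of test vectors such as $\tfrac{1}{\sqrt 2}(\delta_x + i \delta_y)$, but this is outside the scope of the stated claim.
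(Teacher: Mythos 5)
Your proof is correct and is essentially the paper's own argument: both reduce the claim to the first-moment identity $\int t\,\Power[f](t)\,\dd{t}=\innerprod{f}{Hf}$ and then polarize with the test vectors $\delta_x$, $\delta_y$, $f_{x,y}$ (the paper just writes the same computation entrywise through the eigendecomposition, $H_{xy}=\sum_i\lambda_i\phi_{ix}\phi_{iy}$, rather than as a quadratic form). Your closing caveat is also apt: the paper's manipulation likewise implicitly assumes real symmetric $H$ (or real eigenvector entries), since for a genuinely complex Hermitian matrix these test vectors only recover $\mathrm{Re}\,H_{xy}$.
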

\begin{proof}
This result follows from direct computation. For a given eigendecomposition of $H$, let us express each entry of $H$ in terms of its eigenvectors and eigenvalues. For the off-diagonal entries of $H$ where $x \neq y$, we have
\begin{align*}
    H_{xy} &= \sum\limits_{i = 1}^n \lambda_i\phi_{ix} \phi_{iy}^\ast\\
            &= \sum\limits_{i = 1}^n \lambda_i \frac{\qty(\phi_{ix} + \phi_{iy})^2 - \phi_{ix}^2 - \phi_{iy}^2}{2} \\
            &= \sum\limits_{i = 1}^n \lambda_i \innerprod{f_{x,y}}{\phi_i}^2 - \frac{1}{2}\qty(\sum_i \lambda_i \innerprod{f_{x}}{\phi_i}^2 + \sum_i \lambda_i \innerprod{f_{y}}{\phi_i}^2) \\
            &= \int t \Power\qty[f_{x,y}]) \dd{t}  - \frac{1}{2} \qty(\int t \Power\qty[f_{x}]\dd{t} + \int t \Power\qty[f_{y}]\dd{t});
\end{align*}
and for the diagonal entries,
\begin{align*}
    H_{xx} &= \sum_i \lambda_i \phi_{ix}^2 =  \sum_i \lambda_i \innerprod{f_x}{\phi_i}^2=\int t \Power\qty[f_{x}]) \dd{t}.
\end{align*}
\end{proof}

The next desirable attribute that we verify for our signature is permutation invariance. Let $\Phi_\sigma$ be the $n \times n$ column permutation matrix representing $\sigma \in S_n$, where $S_n$ is the symmetric group on $n$. Recall that $S_n$ acts on $\C^n$ by permuting the coordinates of a function $f \in \C^n$ via the action $(\Phi_\sigma^T  f)_i = f_{\sigma(i)}$; similarly, its action on $n \times n$ matrices is given by $M \mapsto \Phi_\sigma^T M \Phi_\sigma$. In terms of entries, $ (\Phi_\sigma^T M \Phi_\sigma)_{ij} = M_{\sigma(i) \sigma(j)}$. The permutation automorphism group $\mathrm{Aut}(M)$ of $M$ is the stabilizer subgroup of $M$; that is, the group of permutations $\sigma$ such that $\Phi_\sigma^T M \Phi_\sigma  = M$. In terms of entries, $M_{\sigma(i) \sigma(j)} = M_{ij}$. In particular, if $M$ is the adjacency matrix or Laplacian of a graph $G$, then, by definition, $\mathrm{Aut}(M) = \mathrm{Aut}(G)$, where $\mathrm{Aut}(G)$ denotes the automorphism group of $G$.


\begin{proposition}[Automorphism invariance] Let $H \in \C^{n \times n}$ be a Hermitian matrix. Consider the action of $\mathrm{Aut}(H)$ on vectors $f \in \C^n$ with unit norm. If $g$ is in the orbit of $f$ under the action of $\mathrm{Aut}(H)$, then $\Power[f] = \Power[g]$. Conversely, for $f_{x,y}$ indexed over $x,y \in 1, \ldots ,n$ as defined in \cref{eq:pair_indicators}, if $\sigma$ is a permutation satisfying $\Power[f_{x,y}] = \Power[f_{\sigma(x),\sigma(y)}]$, for all pairs $(x,y)$, then $\sigma \in \mathrm{Aut}(H)$. 
\end{proposition}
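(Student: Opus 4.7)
The plan is to prove the two directions separately, leveraging the basis-independent reformulation of $\Power$ given in \cref{eq:probmeasureunderHerm} for the forward direction and the inversion formula from \Cref{prop:injectivity} for the converse.

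For the forward direction, suppose $\sigma \in \mathrm{Aut}(H)$, so that $\Phi_\sigma^T H \Phi_\sigma = H$, equivalently $H \Phi_\sigma = \Phi_\sigma H$. Since each spectral projector $P_\lambda$ can be expressed as a polynomial in $H$ (via Lagrange interpolation at the eigenvalues), the permutation matrix $\Phi_\sigma$ also commutes with every $P_\lambda$. Writing $g = \Phi_\sigma^T f$ (any orbit representative arises this way), I will compute
\begin{equation*}
    \langle g, P_\lambda g \rangle
    = \langle \Phi_\sigma^T f, P_\lambda \Phi_\sigma^T f \rangle
    = \langle f, \Phi_\sigma P_\lambda \Phi_\sigma^T f \rangle
    = \langle f, P_\lambda f \rangle,
\end{equation*}
for every $\lambda \in \spec(H)$, using $\Phi_\sigma P_\lambda \Phi_\sigma^T = P_\lambda$ in the final equality. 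Summing against $\delta(t - \lambda)$ over the spectrum then yields $\Power[g] = \Power[f]$, as desired.

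For the converse, suppose $\sigma \in S_n$ satisfies $\Power[f_{x,y}] = \Power[f_{\sigma(x),\sigma(y)}]$ for all (unordered) pairs $\{x,y\}$; in particular, setting $x=y$ gives $\Power[f_x] = \Power[f_{\sigma(x)}]$ for every $x$. Applying the inversion formula of \Cref{prop:injectivity} to both sides, the right-hand side of that formula for the index pair $(x,y)$ coincides with the right-hand side for $(\sigma(x),\sigma(y))$, so
\begin{equation*}
    H_{xy}
    = \int t\, \Power[f_{x,y}]\,\dd{t} - \tfrac{1}{2}\!\left(\int t\, \Power[f_{x}]\,\dd{t} + \int t\, \Power[f_{y}]\,\dd{t}\right)
    = H_{\sigma(x)\sigma(y)},
\end{equation*}
with the diagonal case $x=y$ handled identically via the second branch of the formula. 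Since $H_{\sigma(x)\sigma(y)} = H_{xy}$ for all $x,y$ is precisely the condition $\Phi_\sigma^T H \Phi_\sigma = H$, we conclude $\sigma \in \mathrm{Aut}(H)$.

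Neither direction presents a serious obstacle once the right reformulations are in hand; the only mild subtlety is bookkeeping the permutation action. One must verify that $\Phi_\sigma$ genuinely commutes with each $P_\lambda$ rather than only with $H$ itself (handled by the polynomial-in-$H$ argument, which is needed precisely because $H$ may have degenerate eigenspaces within which $\Phi_\sigma$ can act nontrivially), and one must be careful that the hypothesis in the converse is indexed over unordered pairs so that $\Power[f_x]$ is available from the $x=y$ case. These are easy to verify, so the proof reduces essentially to two short computations chained to the earlier proposition.
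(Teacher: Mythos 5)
Your proposal is correct and follows essentially the same route as the paper: establish that the spectral projections satisfy $\Phi_\sigma P_\lambda \Phi_\sigma^T = P_\lambda$ and hence $\langle f, P_\lambda f\rangle$ is invariant, then deduce the converse by feeding the hypothesis into the injectivity formula of \Cref{prop:injectivity} to get $H_{xy} = H_{\sigma(x)\sigma(y)}$. The only cosmetic difference is how the commutation with the projectors is justified — you use the polynomial-in-$H$ (Lagrange interpolation) argument, while the paper observes that $\Phi_\sigma^\ast$ carries an orthonormal basis of each eigenspace to another such basis, so the projection is unchanged; both are valid.
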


\begin{proof} This is a consequence of simple facts of linear algebra. If $\Phi$ is unitary, i.e., $\Phi^\ast= \inv{\Phi}$, then $H v = \lambda v$ implies $\Phi^\ast v$ is an eigenvector of $\Phi^\ast H \Phi$ with eigenvalue $\lambda$. Suppose $v_1, \ldots, v_m$ is an orthonormal basis for the $\lambda$-eigenspace of $H$. If $\Phi^\ast H \Phi = H$, then $\Phi^\ast v_1, \ldots,\Phi^\ast v_m$ is also an orthonormal basis for the $\lambda$-eigenspace of $H$. We note that the projection on the $\lambda$-eigenspace can be equivalently written as $P_\lambda = \sum vv^\ast$ and $P_\lambda = \sum (\Phi^\ast v)(\Phi^\ast v)^\ast $. Thus, the invariance  $\Phi^\ast H \Phi = H$ translates into an invariance on projections $P_\lambda = \sum (\Phi^\ast v)(\Phi^\ast v)^\ast = \Phi^\ast  \qty(\sum v v^\ast)  \Phi  = \Phi^\ast P_\lambda \Phi$. Thus, for $f \in \C^n$ and $\Phi$ unitary, $(\Phi f)^\ast P_\lambda (\Phi f) = f^\ast (\Phi^\ast P_\lambda \Phi) f = f^\ast P_\lambda f$. Writing $\Power[f](t) = \sum_{\lambda \in \spec (H)} \langle f, P_\lambda f \rangle \delta(t - \lambda)$, it follows from $(\Phi f)^\ast P_\lambda (\Phi f)  = f^\ast P_\lambda f$ that $\Power[f] = \Power[\Phi f]$.

For the converse, if $\Power[f_{x,y}] = \Power[f_{\sigma(x),\sigma(y)}]$ for all pairs $x,y \in  1, \ldots, n$, then by \Cref{prop:injectivity}, we have $M_{xy} = M_{\sigma(x)\sigma(y)}$; therefore $\sigma \in \mathrm{Aut}(H)$. 
\end{proof}

\subsection{Connections to other point signatures}\label{subsec:connections}

Point signatures are typically designed to summarize the shape of a local neighborhood of a vertex or point under investigation. Notable examples include spin images, multi-scale local surface signatures, and integral invariant signatures. A comprehensive survey of these methods can be found in~\cite{Sun2009ADiffusion}. Here, we draw connections between the power spectrum signature at a vertex and other vertex characterizations such as the heat kernel signature~\cite{Sun2009ADiffusion}, diffusion maps~\cite{Coifman2006DiffusionMaps}, the global point signature~\cite{Rustamov07}, and the wavelet signature~\cite{LiChunyuan2013Amdf}. 


Before describing individual methods and contrasting them, it is helpful to highlight the common threads emphasized to varying degrees in their constructions: kernels, related operators, and their eigendecomposition. In other words, despite differences in technique, these methods manipulate the same intrinsic information derived from the eigendecomposition of a linear operator.

Assume that we have a fixed domain $X$ (graph or manifold) equipped with a kernel $k: X \times X \to \R$ with two main properties: 1) symmetry, i.e., for $x,\, y \in X$, $k(x,y) = k(y,x)$, and 2) positive-definiteness, i.e., for any finite subset  $\{x_1, \ldots, x_n \}$ the Gram matrix $K_{ij} := k(x_i,x_j)$ is positive-semi-definite. A common example is the heat kernel associated with the Laplace-Beltrami operator $\Delta$ of a compact Riemannian manifold $\manif$. Consider the heat equation
\begin{equation}\label{eqn:heatequation}
    \begin{aligned}
        \Delta u(x,t) &= -\frac{\partial u(x,t)}{\partial t}, \\
        u(x,0)&=f(x), 
    \end{aligned}
\end{equation}
 The solution of this equation is given by the convolution 
\begin{align}\label{eqn:solnofheateqn}
    u(x,t) &= e^{-t\Delta}f(x) = \int k_t(x,y) f(y) dy.
\end{align}
The function $k_t(x,y)$ in~\cref{eqn:solnofheateqn} is the heat kernel, obtained by solving the PDE~\eqref{eqn:heatequation} where $f$ is the Dirac measure $\delta_x$ on $x$. 
When $\manif$ is compact, the heat kernel can be written using the spectral decomposition of $\Delta$ as
\begin{equation}\label{eq:heatkernelform}
    k_t(x,y) = \sum\limits_{i = 0}^\infty e^{-\lambda_i t} \phi_i(x)\phi_i(y),
\end{equation}   
where $\lambda_i$ and $\phi_i$ are the eigenvalues and eigenfunctions of $\Delta$, respectively. \Cref{eq:heatkernelform} illustrates the interplay between kernels and their corresponding self-adjoint, compact linear integral operators through spectral decomposition. 
The spectral decomposition of the Laplace-Beltrami operator on the real line also famously appears in the Fourier transform of $L^2$-functions. For simplicity, assume $\manif = \R$, so $\Delta = \frac{d^2}{dx^2}$, and we have
\[
    \frac{d^2}{dx^2} e^{2\pi i  x \xi} = -(2\pi \xi)^2 e^{2\pi i x \xi},
\]
where $e^{2\pi i x \xi}$ is an eigenfunction of $\Delta$ with corresponding eigenvalue $-(2\pi \xi)^2$. For a function $f \in L^2(\R)$, the Fourier transform is defined as
\begin{align*}
    \mathcal{F}[f](\xi) 
    &= \int_\R f(x) e^{-2\pi i x \xi}~dx= \langle f, e^{2\pi i x \xi} \rangle,
\end{align*}
where $\langle \cdot, \cdot \rangle$ is the $L^2$ inner product. 

The graph Fourier transform is defined by analogy to the classical Fourier transform. First, observe that the graph Laplacian of a graph $G = (V,E)$ can be viewed as a discretization of the Laplace-Beltrami operator, acting on a vertex function $f \in \R^V$ by
\[
    Lf(x) = \sum\limits_{x \sim y} w(x,y) [f(x) - f(y)],
\]
where $x \sim y$ means that $x$ is adjacent to $y$, and $w(x,y)$ represent edge weights. From there, we use the eigendecomposition of $L$ described earlier to define the graph Fourier transform as 
\begin{align*}
    \mathcal{F}[f](\lambda_i) 
    &= \langle f, \phi_i \rangle,
\end{align*}
which is equivalent to \cref{eq:Fourier}. 

We now describe the point signatures and show how their formulas are related. As stability is one of the main focus of our work, we also discuss this question. However, our power spectra signature stability will be discussed in greater detail in \Cref{sec:perturbationtheory}. In particular, in contrast to some signatures discussed in the following, our signature is stable even when the Laplacian spectrum is degenerate.

\subsubsection{The heat kernel signature} We consider a graph $G(V, E)$ with graph Laplacian $L$, and assume the eigenvalues $\lambda_i$ and eigenvectors $\phi_i$ of $L$ are given. The {\it heat kernel signature} of a point $x$ is defined in~\cite{Sun2009ADiffusion} as
    \[
        \mathsf{hks}(x,t) = \sum\limits_{i = 0}^{n} e^{-\lambda_i t} \phi_i^2(x).
    \]
    The heat kernel signature is related to our signature power spectrum as the expectation of the exponential:
    \begin{equation} \label{eq:heatkernel_exp}
        \mathsf{hks}(x,t) = \int e^{-st}\Power \qty[\delta_x](s) \dd{s}.
    \end{equation}
    We can also view the heat kernel signature as a discrete analogue of the restriction of the heat kernel on manifolds~\cref{eq:heatkernelform} to a fixed spatial point and the temporal domain. An important application illustrated in~\cite{Sun2009ADiffusion} is that the heat kernel signature can be used as an informative, multi-scale descriptor for shapes up to isometry. \cite[Proposition 4]{Sun2009ADiffusion} establishes the stability of the heat kernel approximation by bounding errors on matrix exponential estimations, and showing that it depends only on the matrix norm of the perturbation to the Laplacian. \cite{Sun2009ADiffusion} also gives an intuitive understanding of the stability of $\mathsf{hks}$, due to the heat kernel being a weighted average over all possible paths between $x$ and $y$ at time $t$ under Brownian motion. As such, it is robust under local perturbations of the surface.

    This stability can also be viewed through the lens of spectral perturbation theory, which describes the variation of eigenvalues and eigenvectors with respect to matrix perturbations. On an eigenvalue level, if $H$ is an $n \times n$ Hermitian matrix with eigenvalues $\lambda_1 \leq \cdots \leq \lambda_n$ and $\widetilde{H}$ is an $n \times n$ Hermitian matrix with eigenvalues $\mu_1 \leq \cdots \leq \mu_n$, then, by Weil's theorem, we know
    \begin{equation*}
        \lambda_i + \mu_1 \leq \sigma_i \leq \lambda_i + \mu_n, \quad i = 1,\, \hdots,\,n,
    \end{equation*}
    where the $\sigma_1 \leq \cdots \leq \sigma_n$ are the eigenvalues of $H + \widetilde{H}$. This result, combined with the fact that, for any Hermitian matrix $H$, $\lambda_{\text{max}}(H) = \Vert H \Vert_2$, yields a result (see~\cite{stewart1991perturbation}) that guarantees that for each $i \in \{1,\, \hdots,\,n\}$, we have
    \begin{equation*}
        |\sigma_i - \lambda_i| \leq \Vert \widetilde{H} \Vert_2,
    \end{equation*}
    that is, the eigenvalues of $H$ and its perturbed version $H + \widetilde{H}$ differ by at most the magnitude of the perturbation. Moreover, the Davis-Kahan-$\sin{\theta}$ \cite{DavisKahan1970}, can be used to bound the distance between subspaces spanned by eigenvectors of a matrix and its perturbed version. This theorem guarantees that the computation of the perturbed eigenvalues and eigenspaces is robust to small perturbations of the data set. This type of argument becomes more complicated for degenerate eigenvalues, which may yield unstable eigenfunctions. A stability argument using matrix perturbation theory that is not susceptible to this instability follows from our main result \Cref{thm:Lipschiz}, which is detailed in \Cref{rmk:signature_stability}.

\subsubsection{Diffusion Distances} In~\cite{Coifman2006DiffusionMaps}, the point signature is defined via the {\it diffusion distance} between two points $x$ and $y$, which measures the intrinsic distance between these points as captured by a diffusion kernel $k_t$, a class of kernels that includes the heat kernel (see \Cref{sec:applications} for further discussion) 
The {diffusion distance} is defined by
    \begin{align*}
        \mathsf{D}_t(x,y) 
        &= \sum\limits_{i = 0}^{n} e^{-\lambda_i t} (\phi_i(x) - \phi_i(y))^2,
    \end{align*}
and we can regard the associated diffusion coordinate 
    \[
        \Phi_t(x) = [e^{-\lambda_1 t/2} \phi_1(x),\,  e^{-\lambda_2t/2}\phi_2(x), \, \hdots,\, e^{-\lambda_n t/2} \phi_n(x)]
    \]
as a signature of the point $x$. It is easily shown that the diffusion coordinates can be used to compute the distance
    \[
        \mathsf{D}_t(x,y) = \Vert \Phi_t(x) - \Phi_t(y)\Vert.
    \]
Observe that the diffusion distance and heat kernel signature are intimately related through the equation 
    \begin{align*}
        \mathsf{D}^2_t(x,y) 
        &= \mathsf{hks}(x,t) + \mathsf{hks}(y,t) - 2 k_t(x,y).
    \end{align*}
Hence, the stability of the diffusion distance can be understood in terms of that of the heat kernel. Similar to the heat kernel signature, the diffusion distance can also be expressed as an expectation with respect to the power spectrum as:
    \[
        \mathsf{D}^2_t(x,y)
        = \int e^{-st}\Power \qty[\delta_x - \delta_y](s) \dd{s}.
    \]
As such, \Cref{thm:Lipschiz} implies that the diffusion distance is stable under perturbations of the input operator via \Cref{rmk:signature_stability}.

    
    

\subsubsection{The Global Point Signature} Another relevant point signature closely related to the heat-kernel signature and the diffusion distance is the {\it global point signature} (GPS)~\cite{Rustamov07}. For a given point $x$, GPS$(x)$ is a vector composed of scaled eigenfunctions of the Laplace-Beltrami operator evaluated at $x$: 
\[
    \mathrm{GPS}(x) = \left[\frac{\phi_1(x)}{\sqrt{\lambda_1}},\, \frac{\phi_2(x)}{\sqrt{\lambda_2}} \hdots, \frac{\phi_n(x)}{\sqrt{\lambda_n}} \right]. 
\]
Some advantages of this signature are invariance under isometric deformations of a shape and robustness to noise since it does not explicitly rely on geodesic distances. However, in contrast to all the signatures considered here, the GPS vector depends on eigenfunctions defined up to a change in sign. As a result, this signature is less stable, and closely spaced eigenvalues can lead to eigenfunction switching.
    
\subsubsection{The Wavelet Signature} The Spectral Graph Wavelet Transform (SGWT)~\cite{Hammond2011WaveletsTheory} of a vertex function $f \in \R^V$ is defined as
    \[
        (T_g^t f)(x) = \sum\limits_{i = 0}^{n} g(t\lambda_i) \phi_i(x)\langle \phi_i, f  \rangle, 
    \]
    where $t$ is the scale parameter, and $g$ is a non-negative real-valued function that behaves as a band-pass filter. For associated wavelets are orthogonal and well-defined, $g$ must have the properties $g(0) = 0$ and $\lim_{\lambda \to \infty} g = 0$. 
    In~\cite{LiChunyuan2013Amdf}, the SGWT was used to define the spectral wavelet signature at a vertex $x$ by setting $f = \delta_x$:
    \begin{align*}
        W_g(x,t) 
        &= \sum\limits_{i = 0}^{n}  g(t\lambda_i)\phi^2_i(x) = \int g(ts) \Power \qty[\delta_x](s) \dd{s},
    \end{align*}
    which we can once again express as an expectation of the function $g(ts)$ with respect to the power spectrum at the vertex $x$. The wavelet signature allows the analysis and design of efficient shape signatures for nonrigid 3D shape retrieval. The optimization process for the algorithm proposed in~\cite{Hammond2011WaveletsTheory} depends on parametrizing the wavelet function (e.g., basis function, their number, as well as other parameters from a Chebyshev polynomial approximation). 
    Once again,  \Cref{thm:Lipschiz} implies that spectral wavelet signatures are stable under perturbations of the input operator via \Cref{rmk:signature_stability}.

\subsection{Representing Power Spectra using Quantiles}
The power spectra $\Power\qty[f_x]$ and $\Power\qty[f_{x,y}]$ link the global properties of the graph described by its eigendecomposition to individual and pairs of vertices. Our primary objective is to utilize power spectra to generate graph features for machine learning applications. However, since power spectra are probability measures on $\R$, they cannot, in principle, be directly parametrized in a finite-dimensional space. To address this, we propose the following approach to summarize the power spectra as finite-dimensional vectors.

The method we develop uses quantiles of the distribution as a finite-dimensional summary of the distribution. The quantile function at $t \in [0,1]$ is given by 
\begin{equation}\label{eq:quantilesdefinition}
    q(t) =  \inf \sett{x \in \R}{ F(t) \geq x},
\end{equation}
where $F: \R \to [0,1]$ is the cumulative distribution function of the distribution. By sampling the quantile function at fixed values $t_1, \ldots, t_n$, we obtain a finite-dimensional summary of the underlying distribution, in this case, the power spectra. These sampled quantiles are in effect a sorted list of the eigenvalues of the Laplacian, whose frequencies of appearance in the list are proportional to the probability accorded by the power spectra in the limit where infinitely many quantiles are sampled.

The $L_p$-norm of the distance between percentiles also approximates the Wasserstein distance between the two power spectra. For two probability distributions $\mu_1$ and $\mu_2$ on $\R$, the $p$-Wasserstein distance is given by 
\begin{equation}\label{eqn:Wasserstein}
    W_p(\mu_1, \mu_2) = \qty(\int \abs{q_1(t) - q_2(t)}^p \dd{t})^{1/p}.
\end{equation}

We selected the Wasserstein distance as our metric due to its effectiveness and stability in matching distributions. Fundamentally, the Wasserstein distance is the minimal cost required to transport the mass of a probability distribution $\mu_1$ to transform it into another mass of a probability distribution $\mu_2$. Unlike other metrics such as total variation, it accounts for the underlying geometry of the space and offers a more informative summary of the distribution. For example, as shown in \cite{Kolourietal2017}, distribution paths (geodesics) that interpolate between two distributions better preserve structural integrity when constructed using the Wasserstein distance rather than the $L^2$-distance. Furthermore, the Wasserstein distance is more robust to small perturbations in the distribution function. In what follows, we conduct a perturbation analysis of our power spectra features using properties of the Wasserstein distance and classical spectral perturbation theory for the graph Laplace operator.


\section{Stability of Power Spectra}\label{sec:perturbationtheory}

In this section, we demonstrate that our signature and the associated quantiles are stable vertex features. To do this, we show that the Wasserstein distance, \cref{eqn:Wasserstein}, is robust under small perturbations of the Laplacian; in particular, \Cref{thm:Lipschiz} states that the Wasserstein distance perturbation is simply proportional to the magnitude of the Laplacian perturbation. More notably, our result holds regardless of whether the Laplacian is degenerate, setting it apart from other signature stability results.
This implies that the Wasserstein distance captures an intrinsic property of our signature and is the appropriate measure of distance between our power spectra.

Without further ado, we state the main result of this section. 

\begin{theorem}\label{thm:Lipschiz}
    Let $f \in \C^n$ be a unit vector, and denote by $\mu^H_f$ the probability measure of the power spectrum of $f$ under a Hermitian matrix $H \in \C^{n \times n}$. Then, for two Hermitian matrices $H$ and $H'$, we have
    \begin{equation}\label{eq:Lipschiz}
        W_1(\mu_f^{H}, \mu_f^{H'}) \leq n\|H - H'\|_2.
    \end{equation}
\end{theorem}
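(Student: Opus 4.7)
The plan is to convert the Wasserstein estimate into a matrix-calculus problem through Kantorovich--Rubinstein duality and the functional calculus for Hermitian matrices. By duality,
\[
W_1(\mu_f^H, \mu_f^{H'}) = \sup_{\mathrm{Lip}(g)\leq 1}\left|\int g\,d\mu_f^H - \int g\,d\mu_f^{H'}\right|,
\]
and by \cref{eq:probmeasureunderHerm} combined with the spectral theorem, $\int g\,d\mu_f^H = \langle f, g(H) f\rangle = \mathrm{tr}(g(H)\,ff^*)$, with the analogous identity for $H'$. Crucially, the matrix $g(H)$ depends only on $H$, not on any particular choice of eigenbasis, so the whole argument is blind to spectral degeneracies---the feature the authors emphasise right after the theorem statement.

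Applying the H\"older-type trace inequality $|\mathrm{tr}(AB)| \leq \|A\|_1 \|B\|_2$ (with $\|\cdot\|_1$ the trace norm and $\|\cdot\|_2$ the operator norm, as in the paper) with $B = ff^*$ and $\|ff^*\|_2 = 1$ reduces the task to uniformly bounding $\|g(H) - g(H')\|_1$ over $1$-Lipschitz $g$. To land the factor of $n$, I pass through the Hilbert--Schmidt norm $\|\cdot\|_{HS}$ via the elementary finite-dimensional estimates $\|X\|_1 \leq \sqrt{n}\,\|X\|_{HS}$ and $\|H - H'\|_{HS} \leq \sqrt{n}\,\|H - H'\|_2$. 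Chaining these two with the missing ingredient---a Hilbert--Schmidt operator-Lipschitz estimate---will produce the desired $n\,\|H - H'\|_2$.

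That operator-Lipschitz estimate,
\[
\|g(H) - g(H')\|_{HS} \leq \|H - H'\|_{HS}
\]
for every $1$-Lipschitz $g$ and Hermitian $H, H'$, is the main obstacle and the only step not of a bookkeeping nature. I would prove it by simultaneous diagonalisation: write $H = V\Lambda V^*$ and $H' = WMW^*$, set $U = V^*W$, and expand each Hilbert--Schmidt norm via the trace formula. After using $UU^* = U^*U = I$ to rewrite $\sum_i \lambda_i^2$ and $\sum_j m_j^2$ as double sums weighted by $|U_{ij}|^2$, the cross terms collapse into the twin identities
\begin{align*}
\|H - H'\|_{HS}^2 &= \sum_{i,j}(\lambda_i - m_j)^2\,|U_{ij}|^2, \\
\|g(H) - g(H')\|_{HS}^2 &= \sum_{i,j}\bigl(g(\lambda_i) - g(m_j)\bigr)^2\,|U_{ij}|^2.
\end{align*}
The inequality then follows from the pointwise estimate $|g(\lambda_i) - g(m_j)| \leq |\lambda_i - m_j|$ applied term by term with the non-negative weights $|U_{ij}|^2$. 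Because this calculation only ever uses the eigenvalues of $H, H'$ and the full change-of-basis matrix $U$---never individual eigenvectors within a given eigenspace---the whole chain is immune to spectral multiplicities, which is exactly the robustness property \Cref{thm:Lipschiz} is meant to capture.
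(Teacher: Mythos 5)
Your argument is correct, and it takes a genuinely different route from the paper. You reduce the theorem to Kantorovich--Rubinstein duality plus the fact that a $1$-Lipschitz function is operator Lipschitz in the Hilbert--Schmidt norm, proved by the double-sum identity $\|g(H)-g(H')\|_{HS}^2=\sum_{i,j}\bigl(g(\lambda_i)-g(m_j)\bigr)^2|U_{ij}|^2$ with $U=V^*W$; chaining $|\mathrm{tr}(AB)|\leq\|A\|_1\|B\|_2$, $\|X\|_1\leq\sqrt{n}\,\|X\|_{HS}$, and $\|H-H'\|_{HS}\leq\sqrt{n}\,\|H-H'\|_2$ then gives the stated bound, and in fact the sharper intermediate estimate $W_1(\mu_f^H,\mu_f^{H'})\leq\sqrt{n}\,\|H-H'\|_{HS}$. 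The paper instead argues perturbatively: it Taylor-expands the eigenprojections of $H(t)=H+t\Delta$ using Kato's theory, constructs an explicit transport plan to separate the eigenvalue and projection contributions (\Cref{prop:taylor_bound_a}), exploits a cancellation of the eigengap factors in the first-order term (\Cref{prop:local_bound}, \Cref{lem:cancellation_first_order}), and then upgrades the resulting local estimate to a global Lipschitz bound via \Cref{lem:local_to_global_bound}. Your approach is shorter, avoids the analyticity and local-to-global machinery entirely, is manifestly insensitive to spectral degeneracies (only the unitary $U$ and the eigenvalues enter), and yields a dimension-dependent bound in the Frobenius norm that implies the paper's; the paper's route, on the other hand, exposes the first-order perturbative structure of the signature (how the eigenvalue shifts and projection corrections each contribute, and why the $1/\gamma(H)$ factors cancel), which is what powers \Cref{rmk:projection_stability} and the discussion comparing with eigenvector-based encodings. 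Both are valid; yours would serve as a more economical proof of the theorem as stated.
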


Before arriving at this theorem, we first establish the convergence in the Wasserstein metric $W_p$ for $p \in [1,\infty)$ by analyzing weak convergence properties of the power spectra. While this convergence result demonstrates the continuity of our signature, it does not provide the convergence rate in~\cref{eq:Lipschiz}. To obtain the latter, we make extensive use of matrix perturbation theory as developed in~\cite{Kato1995PerturbationOperators}, along with a suite of propositions and perturbation lemmas. These lemmas are detailed in \Cref{app:perturbationlemmas}. Note that the arguments in~\Cref{subsec:weakconvetc,subsec:convergencerates} are independent of each other; therefore, a reader primarily interested in convergence rates may skip \Cref{subsec:weakconvetc}.

\begin{remark} \label{rmk:approx_sym}
Suppose $\Phi_\sigma$ is a permutation matrix associated with a permutation $\sigma$, and $f,g \in \R^n$ are related by a permutation $f = \Phi_\sigma^T g$. Because $\mu_g^H = \mu_{\Phi_\sigma^T g}^{\Phi_\sigma^T H \Phi_\sigma} = \mu_{f}^{\Phi_\sigma^T H \Phi_\sigma}$, we obtain the bound on the power spectrum of $f$ and $g$, as a consequence of \Cref{thm:Lipschiz}:
    \begin{equation*}
        W_1(\mu_f^H, \mu_g^H) = W_1(\mu_f^H, \mu_{f}^{\Phi_\sigma^T H \Phi_\sigma}) \leq n \| H - \Phi_\sigma^T H \Phi_\sigma \|_2
    \end{equation*}
In particular, if we consider the power spectrum signatures at vertices $i$ and $j$, this implies  
   \begin{equation*}
        W_1(\mu_i^H, \mu_j^H)  \leq  n \min_{\sigma \in S_n,\ \sigma(j) = i} \| H - \Phi_\sigma^T H\Phi_\sigma \|_2
    \end{equation*} 
In other words, if we have an approximate symmetry $\sigma$ of the matrix, in the sense that $\| H - \Phi_\sigma^T H\Phi_\sigma \|_2$ is small, then we expect $W_1(\mu_i^H, \mu_j^H)$ to be small as well if $i$ and $j$ are in the orbit of $\sigma$. 
\end{remark}

\begin{remark} \label{rmk:signature_stability}
    Since we have the dual representation of the 1-Wasserstein distance as 
\begin{equation}
    W_1(\mu, \nu) = \sup \{ \int f d(\mu - \nu) \ | \ f \in  C^0(\R) \ \text{and} \ \mathrm{Lip}(f) \leq 1 \},
\end{equation}
we can obtain a stability result for point signatures using the stability result above. For example, the variation of the heat kernel signature~\cref{eq:heatkernel_exp} at a vertex $x$ with respect to a change in the graph Laplacian $L \mapsto  L'$ can be bounded above by
\begin{align*}
     |\mathsf{hks}_L(x,t)  - \mathsf{hks}_{L'}(x,t)| &= \left | \int e^{-ts} \dd{(\mu^L_x} - \dd {\mu^{L'}_x)} \right| \\
     & = t \left | \int \frac{1}{t}e^{-t x}  \dd{(\mu^L_x} - \dd {\mu^{L'}_x)} \right| \\
     &\leq t W_1(\mu^L_x, \mu^{L'}_x) 
     \leq t n \| L - L' \|_2.
\end{align*}
\end{remark}

\subsection{Convergence in $W_p$ via weak convergence of power spectra}\label{subsec:weakconvetc}

In the proposition below, we show that any sequence $L_k$ or Hermitian matrices that converges to some Hermitian matrix $L$ induces a weak convergence of their corresponding power spectra $\mu^{L_k}_f \Rightarrow \mu^{L}_f$. 
 
\begin{proposition} 
    Let $\{L_k\}$ be a sequence of $n \times n$ Hermitian matrices, and $f \in \C^n$ be some fixed vector with unit norm. If the sequence $L_k$ converges to some Hermitian matrix $L_k \to L$ in some matrix norm $\| \cdot \|$, then we have a weak convergence of measures  $\mu^{L_k}_f \Rightarrow \mu^{L}_f$.
\end{proposition}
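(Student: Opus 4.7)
The plan is to verify weak convergence via the standard criterion: $\mu^{L_k}_f \Rightarrow \mu^L_f$ if and only if $\int g\, d\mu^{L_k}_f \to \int g\, d\mu^L_f$ for every bounded continuous $g : \R \to \R$. The key observation is that using the projection-based description in \cref{eq:probmeasureunderHerm}, for any Hermitian $H$ and continuous $g$ one has
\begin{equation*}
    \int g\, d\mu^H_f = \sum_{\lambda \in \spec(H)} g(\lambda) \langle f, P_\lambda f\rangle = \langle f, g(H) f\rangle,
\end{equation*}
where $g(H) := \sum_{\lambda \in \spec(H)} g(\lambda) P_\lambda$ is defined via continuous functional calculus. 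So the task reduces to proving $\langle f, g(L_k) f\rangle \to \langle f, g(L) f\rangle$ for every $g \in C_b(\R)$. This already explains why the degeneracies of $L$ are harmless: only the projections $P_\lambda$ appear, never a choice of eigenbasis, so the argument is insensitive to eigenvalue collisions.

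The next step is a uniform spectral containment: since $L_k \to L$ in norm, there exists $R > 0$ with $\spec(L_k), \spec(L) \subset I := [-R, R]$ for all $k$. Because $\mu^{L_k}_f$ and $\mu^L_f$ are supported in $I$, we may replace $g \in C_b(\R)$ by its restriction to $I$ and work with $C(I)$. For polynomials $p(x) = \sum_{m} a_m x^m$, the norm continuity $\|p(L_k) - p(L)\| \to 0$ is immediate from the telescoping estimate $\|L_k^m - L^m\| \le m \max(\|L_k\|, \|L\|)^{m-1} \|L_k - L\|$, which gives $\langle f, p(L_k) f\rangle \to \langle f, p(L) f\rangle$.

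The passage from polynomials to arbitrary $g \in C(I)$ is a routine three-epsilon argument using Stone--Weierstrass. Given $\varepsilon > 0$, pick a polynomial $p$ with $\sup_I |g - p| < \varepsilon/3$. For Hermitian $H$ with spectrum in $I$, the spectral-theorem bound $\|h(H)\| = \max_{\lambda \in \spec(H)} |h(\lambda)| \le \sup_I |h|$ gives $|\langle f, (g - p)(H) f\rangle| \le \varepsilon/3$ for $H = L_k$ or $H = L$. Combining with the polynomial case and the triangle inequality yields $|\langle f, g(L_k) f\rangle - \langle f, g(L) f\rangle| < \varepsilon$ for all sufficiently large $k$, completing the proof.

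There is no real obstacle to overcome here; this is essentially an application of continuous functional calculus on a uniformly bounded family of Hermitian matrices. The only thing worth stressing is that the argument goes through the measures $\langle f, P_\lambda f\rangle$ rather than individual eigenvectors, which is exactly why weak convergence holds without any nondegeneracy hypothesis on the spectrum of $L$.
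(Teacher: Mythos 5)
Your proof is correct, but it takes a genuinely different route from the paper. The paper argues via characteristic functions: it writes the characteristic function of $\mu^L_f$ as $\varphi^L_f(\xi) = \langle f, \exp(i\xi L) f\rangle$, invokes continuity of the matrix exponential to get pointwise convergence $\varphi^{L_k}_f \to \varphi^L_f$, and then concludes weak convergence by L\'evy's continuity theorem. You instead verify the test-function definition of weak convergence directly: you identify $\int g\, d\mu^H_f = \langle f, g(H) f\rangle$ via the functional calculus, establish a uniform spectral containment $\spec(L_k) \subset [-R,R]$ from boundedness of the convergent sequence, reduce to polynomials by Stone--Weierstrass, and control the error with the bound $\|h(H)\|_2 \le \sup_{[-R,R]}|h|$ for Hermitian $H$. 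Both arguments are sound and both correctly sidestep any nondegeneracy assumption by working only with spectral projections rather than eigenvectors. The paper's route is shorter if one takes L\'evy's theorem as a black box; yours is more elementary and self-contained, and your uniform spectral containment step is essentially the same observation the paper has to make separately in its subsequent proposition (where compact support is needed to upgrade weak convergence to $W_p$ convergence), so your argument front-loads work that the paper defers.
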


\begin{proof}  
    Recall that the \emph{characteristic function} $\varphi: \R \to \C$  of a probability measure is its Fourier transform $\xi \mapsto \mathbb{E}[e^{i\xi x}]$; for $\mu^L_f$, its characteristic is given by 
\begin{equation}
        \varphi^L_f(\xi)  = \sum_{\lambda \in \mathrm{spec}(L)}\langle f, P_\lambda f \rangle e^{i\xi \lambda}.
\end{equation}
The characteristic function $\varphi^L_f$ fully determines $\mu^L_f$; we can recover $\mu^L_f$ from  $\varphi^L_f$ by performing an inverse Fourier transform. Since the eigendecomposition of $L$ is given by $L = \sum_{\lambda \in \spec(L)} \lambda P_\lambda$, we can rewrite the characteristic function using the matrix exponential $\exp : \C^{n \times n} \to \mathrm{GL}(n, \C)$:
\begin{align}
    \varphi^L_f(\xi) &= \left\langle f,\left(\sum_{\lambda \in \mathrm{spec}(L)} e^{i\xi \lambda} P_\lambda \right) f\right\rangle = \langle f, \exp({\imath \xi L}) f\rangle
\end{align}
The matrix exponential is continuous with respect to any matrix norm $\| \cdot \|$. Hence, the characteristic function is a continuous function of $L$. This implies that for any $\xi \in \R$,
    \begin{align*}
     L_k \to L \quad \implies \quad  \varphi_k(\xi) =  \langle f, \exp({\imath \xi L_k}) f\rangle \to \langle f, \exp({\imath \xi L}) f\rangle =: \varphi(\xi).
    \end{align*}
    Since we have pointwise convergence $\varphi_k(\xi) \to \varphi(\xi)$, we can apply L\'evy's convergence theorem, which guarantees that since $\varphi$ is the characteristic of $\mu^L_f$, we have a weak convergence $\mu^{L_k}_f \Rightarrow \mu^{L}_f$.
\end{proof}

Weak convergence of probability measures $\mu_k \to \mu$ on $\R$ yields the following two equivalent conditions. First, if $F_k$ and $F$ are the cdf's of $\mu_k$ and $\mu$ respectively, then for any $t \in \R$ where $F$ is continuous,  $\lim_{k \to \infty} F_k(t) = F(t)$. Second, for any bounded, real valued continuous function $g$, the expectations of $g$ under $\mu_k$ converge to the expectation under $\mu$, i.e., $\mathbb{E}_k[g] \to \mathbb{E}[g]$. More importantly, the result above can also be expressed in terms of the Wasserstein distance since, for $\mu_k$ with support contained in a compact subset of $\R$, weak convergence is equivalent to convergence in the Wasserstein metric $W_p$ for $p \in [1,\infty)$~\cite[Theorem 5.10]{Santambrogio2015OptimalMathematicians}.

\begin{proposition} Let $\{L_k\}$ be a sequence of $n \times n$ Hermitian matrices, and assume $f \in \C^n$ be some fixed vector with unit norm. If $\{L_k\}$ converges to some Hermitian matrix $L$ in a matrix norm $\| \cdot \|$ induced by a vector norm, then for $p \in [1, \infty)$, we have  $W_p(\mu^{L_k}_f, \mu^L_f) \to 0$.
\end{proposition}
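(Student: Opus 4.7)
The plan is to combine the weak convergence result of the preceding proposition with a uniform compactness argument on the supports of the power spectra, and then invoke the standard equivalence between weak convergence and $W_p$-convergence for probability measures carried by a common compact set.

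First, observe that since $\C^{n\times n}$ is finite-dimensional, all matrix norms are equivalent, so convergence $L_k \to L$ in the given induced matrix norm is equivalent to convergence in any other matrix norm. The preceding proposition therefore applies and yields the weak convergence $\mu^{L_k}_f \Rightarrow \mu^{L}_f$.

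Next, I would establish that the measures $\{\mu^{L_k}_f\}_{k\geq 1}$ together with $\mu^{L}_f$ are all supported in a single compact interval $[-M,M] \subset \R$. By definition, the support of $\mu^{L_k}_f$ is contained in $\mathrm{spec}(L_k)$, and for any matrix norm induced by a vector norm we have the standard inequality
\begin{equation*}
    \rho(L_k) \;\leq\; \|L_k\|,
\end{equation*}
where $\rho$ denotes the spectral radius. Since $\{L_k\}$ converges, $\sup_k \|L_k\| < \infty$, and hence there exists $M > 0$ such that $\mathrm{spec}(L_k) \cup \mathrm{spec}(L) \subset [-M,M]$ uniformly in $k$.

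Finally, I would invoke the fact already cited in the excerpt \cite[Theorem 5.10]{Santambrogio2015OptimalMathematicians}: for probability measures on $\R$ whose supports lie in a common compact set, weak convergence is equivalent to convergence in the Wasserstein distance $W_p$ for every $p \in [1, \infty)$. Combining this with the two previous steps yields $W_p(\mu^{L_k}_f, \mu^{L}_f) \to 0$, completing the proof.

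The argument is essentially a two-line assembly of existing facts; the only subtlety, and therefore the ``main obstacle,'' is verifying uniform compact support rigorously for an \emph{arbitrary} induced matrix norm rather than only the spectral norm. The bound $\rho(L_k) \leq \|L_k\|$ for any induced norm is classical and resolves this cleanly, so no delicate estimates are needed. Note that this proposition gives qualitative continuity but no convergence rate; the quantitative Lipschitz bound of \Cref{thm:Lipschiz} requires the separate perturbation-theoretic argument carried out in \Cref{subsec:convergencerates}.
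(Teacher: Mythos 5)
Your proof is correct and follows essentially the same route as the paper's: weak convergence from the preceding proposition, a uniform compact interval containing all the spectra, and the equivalence of weak and $W_p$ convergence on a common compact set from \cite[Theorem 5.10]{Santambrogio2015OptimalMathematicians}. The only cosmetic difference is that you bound all spectra at once via $\rho(L_k)\leq\|L_k\|$ and boundedness of the convergent sequence, whereas the paper passes to the two-norm (equal to the spectral radius for Hermitian matrices) and confines the spectra to $(-r,r)$ only for sufficiently large $k$; both are fine.
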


\begin{proof}
    Suppose $L_k \to L$ in some matrix norm $\| \cdot \|$. Due to the equivalence of matrix norms, we have in particular, $L_k \to L$ in the operator two-norm $\| \cdot \|_2$, and thus $\|L_k \|_2 \to \|L \|_2$. Recall that, for any finite-dimensional Hermitian matrix $M$, the two-norm is the \emph{spectral radius} $\rho(M) < \infty$, that is, the maximum of the absolute value of the eigenvalues of $M$. Thus $\|L_k \|_2 \to \|L \|_2$ implies for any $r > \rho(L)$, there is some $N$ such that for $k \geq N$ the spectrum of $L_k$ is contained in the interval $(-r,r)$. For $L_k$ and $k \geq N$, the probability measures $\mu^{L_k}_f$, which are supported on the eigenvalues of $L_k$, have supports contained in $[-r,r]$. Since $\mu^{L_{j+N}}_f$ is a sequence of probability measures on $[-r,r]$ that weakly converges to $\mu^L_f$, ~\cite[Theorem 5.10]{Santambrogio2015OptimalMathematicians} implies this is equivalent to $W_p(\mu^{L_{j+N}}_f, \mu^L_f) \to 0$. Consequently, $W_p(\mu^{L_{k}}_f, \mu^L_f) \to 0$ for any sequence $L_k \to L$.
\end{proof}

This concludes our stability proof via weak convergence of power spectra. Next, we turn our attention to quantifying this stability.

\subsection{Convergence Rates}\label{subsec:convergencerates}

The next result bounds the convergence rate in $W_1$. Note that, by the equivalence of Wasserstein's distances shown in~\cite[P.161]{Santambrogio2015OptimalMathematicians}, this is sufficient. In other words, we are allowed to focus our perturbation analysis on $W_1$ with the assurance that the conclusion is transferable to other $W_p$ for any $p \in (1,\infty)$. 


For a Hermitian matrix $H$, recall that $\spec (H)$ denotes its set of distinct eigenvalues. We define  $\gamma(H)$ to be the minimum gap between distinct eigenvalues
\begin{align}
    \gamma(H) &:= \min \{|\lambda -\mu|\ : \ \lambda \neq \mu,\ \lambda, \mu \in \spec (H)\}. \label{eq:spectral_gap} 
\end{align}
Note that even if there are eigenvalues with algebraic multiplicity greater than one, we only count distinct eigenvalues in the definition of $\gamma(H)$. 

In what follows, we present some helpful from matrix perturbation theory that will be crucial in establishing the convergence rate. An extensive discussion of these results can be found in~\cite{Kato1995PerturbationOperators}. 
Assume a Hermitian matrix $H$ is linearly perturbed by another Hermitian matrix $\Delta$. This can be expressed by the linear interpolation $H(t)= H + t\Delta$, for $t \in \R$. The first key result states that the eigenvalues and eigenprojections of $H(t)$ are analytic functions of $t$~\cite[Theorem II.6.1]{Kato1995PerturbationOperators} and provide an expression for their Taylor expansions. These will be essential to bounding errors in the perturbed eigenvalues and eigenvectors' first-order approximations. 

\begin{fact}
    Suppose $\lambda_h$ is an eigenvalue of $H$ with multiplicity $m_h$. Then, we have $j= 1,\ldots, m_h$ analytic functions $\lambda_{h,j}$ satisfying  $\lambda_{h,j}(0) = \lambda_h$, such that  $\lambda_{h,j}(t)$ are eigenvalues of $H(t)$. They admit a locally convergent Taylor series at $t= 0$:
\begin{equation}
    \lambda_{h,j}(t) = \lambda_h + \fo{\lambda_{h,j}}t + \lambda_{h,j}^{(2)}t^2 + \cdots.
\end{equation}
Next, consider the $(m_h)$-dimensional subspace corresponding to the direct sum of the eigenspaces of $\lambda_{h,1}(t)$, $\ldots, \lambda_{h,m_h}(t)$.  At $t = 0$, this is simply the eigenspace of $\lambda_h$. The projection $P_h(t)$ onto this subspace admits a locally convergent Taylor series  at $t = 0$, 
\begin{equation} \label{eq:projection_taylor_series}
    P_h(t) = P_h + \fo{P_h}t + P_h^{(2)}t^2 + \cdots.
\end{equation}
If $t < \frac{\gamma(H)}{2\|\Delta \|_2}$, then the Taylor series of $P_h(t)$ in~\cref{eq:projection_taylor_series} converges~\cite[Theorem II.3.9]{Kato1995PerturbationOperators}. 
\end{fact}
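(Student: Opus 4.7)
The plan is to establish both parts of the Fact via the resolvent formalism, which is the standard route in analytic perturbation theory \cite{Kato1995PerturbationOperators}. Denote the resolvent of $H(t) = H + t\Delta$ by $R(z, H(t)) := (H(t) - z I)^{-1}$, viewed as a meromorphic function of $z \in \C$ with poles at the eigenvalues of $H(t)$. The guiding idea is that the total eigenprojection onto the cluster of eigenvalues of $H(t)$ near $\lambda_h$ can be written as a Riesz contour integral of the resolvent, and Taylor-expanding the resolvent in $t$ then yields the Taylor expansion of the projection. The eigenvalues are then extracted by reducing to a constant-dimension cluster subspace via the transformation function and invoking Rellich's theorem.

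Concretely, I would fix the positively oriented circle $\Gamma_h$ of radius $\gamma(H)/2$ centered at $\lambda_h$; by construction $\Gamma_h$ separates $\lambda_h$ from $\spec(H) \setminus \{\lambda_h\}$ and encloses no other unperturbed eigenvalue. On $\Gamma_h$ the distance from $z$ to $\spec(H)$ is exactly $\gamma(H)/2$, so the spectral representation $R(z, H) = \sum_{\mu \in \spec(H)} (\mu - z)^{-1} P_\mu$ yields the uniform bound $\|R(z, H)\|_2 \leq 2/\gamma(H)$. The Neumann expansion
\[
R(z, H(t)) = R(z, H) \sum_{k = 0}^{\infty} \bigl(-t\, \Delta\, R(z, H)\bigr)^{k}
\]
therefore converges in operator norm, uniformly in $z \in \Gamma_h$, whenever $|t| < \gamma(H)/(2\|\Delta\|_2)$. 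For such $t$, a Weyl-type bound ensures the perturbed eigenvalues near $\lambda_h$ stay strictly inside $\Gamma_h$, so the Riesz projection $P_h(t) = -\frac{1}{2\pi i}\oint_{\Gamma_h} R(z, H(t))\, dz$ is well defined and equals the total projection onto the $\lambda_{h,1}(t), \ldots, \lambda_{h,m_h}(t)$-eigenspaces. Exchanging sum and integral in the Neumann expansion produces the convergent Taylor series \eqref{eq:projection_taylor_series}, with coefficients $P_h^{(k)} = \frac{(-1)^{k+1}}{2\pi i} \oint_{\Gamma_h} R(z, H)\bigl(\Delta\, R(z, H)\bigr)^k dz$. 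This simultaneously yields the projection part of the Fact and the explicit radius of convergence.

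For the eigenvalues, the first move is to reduce to the finite-dimensional cluster subspace $\mathrm{Im}\, P_h(t)$, which has constant dimension $m_h$ and varies analytically in $t$ by the step above. I would then invoke Kato's transformation function: the differential equation $\dot U(t) = [\dot P_h(t), P_h(t)]\, U(t)$, $U(0) = I$, has a unique analytic unitary solution satisfying $U(t) P_h U(t)^{-1} = P_h(t)$ on the disc $|t| < \gamma(H)/(2\|\Delta\|_2)$. Conjugating by $U(t)$ transfers the spectral problem to diagonalising the analytic Hermitian family $U(t)^{-1} H(t) U(t)$ acting on the \emph{fixed} $m_h$-dimensional space $\mathrm{Im}\, P_h$. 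Rellich's theorem for analytic one-parameter families of Hermitian matrices then yields $m_h$ real-analytic eigenvalue functions $\lambda_{h,j}(t)$ with $\lambda_{h,j}(0) = \lambda_h$, establishing the Taylor expansion for the eigenvalues on the same disc.

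The main obstacle is this last step, namely, ruling out the fractional-power (Puiseux) exponents that generically appear when an algebraically multiple eigenvalue splits under an analytic perturbation. For a non-Hermitian family, such Puiseux exponents genuinely do occur, so one cannot conclude Taylor analyticity from the algebraic equation $\det(H(t) - z I) = 0$ alone. Rellich's theorem exploits the reality of the spectrum to force all Puiseux branches emanating from a Hermitian degenerate eigenvalue to collapse into honest integer-power Taylor series; the cleanest delivery of this is via the transformation function $U(t)$, whose construction in turn rests on the analyticity of $P_h(t)$ established in the resolvent step. Thus the whole argument is anchored to the Neumann-series analysis of the resolvent on $\Gamma_h$.
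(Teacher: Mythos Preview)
The paper does not prove this statement: it is recorded as a \emph{Fact} and attributed directly to Kato~\cite[Theorem II.6.1 and Theorem II.3.9]{Kato1995PerturbationOperators}, with no argument given beyond the citation. Your outline is a faithful sketch of exactly the resolvent/Riesz-projection machinery that Kato develops in those sections, so you are aligned with what the paper defers to; there is nothing to compare against beyond that.

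One minor point of phrasing: on $\Gamma_h$ the distance from $z$ to $\spec(H)$ is \emph{at least} $\gamma(H)/2$ (it equals $\gamma(H)/2$ for the point $\lambda_h$ itself, and is $\geq \gamma(H)/2$ for the other eigenvalues by the triangle inequality), which is what you need for the resolvent bound; ``exactly'' is harmless here but slightly misleading.
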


Now, we provide some formulas for first-order correction terms. 

\begin{fact} \label{fact:splitting}
    By~\cite[Theorem II.5.4]{Kato1995PerturbationOperators}, the first-order corrections $\fo{\lambda_{h,j}}$ to the eigenvalues are the set of eigenvalues of the matrix product $P_h\Delta P_h$ in the subspace $\mathcal{V}_h = P_h (\mathbb{C}^n)$, while the first order correction $\fo{P_h}$ to the projection is given by
    \begin{align}\label{eq:projection_fo}
        \fo{P_h} &= -(P_h\Delta S_h + S_h \Delta P_h).
    \end{align}
    Here $S_h$ is the \emph{reduced resolvent}~\cite[eq.s I.5.27-29, p.40]{Kato1995PerturbationOperators} of the eigenvalue $\lambda_h$
    \begin{equation} \label{eq:resolvent}
        S_h = \sum_{k \neq h}\frac{1}{\lambda_k - \lambda_h}P_k.
    \end{equation}
\end{fact}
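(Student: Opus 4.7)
The plan is to establish the two halves of the fact by complementary classical techniques: Rayleigh--Schr\"odinger perturbation theory for the first-order eigenvalue corrections, and the Cauchy--Dunford (Riesz) contour integral representation of the spectral projection for $\fo{P_h}$. Both arguments lean on the joint analyticity of the resolvent $R(z,t) = (H(t) - z)^{-1}$ in $(z,t)$ on the complement of $\spec(H(t))$, which is exactly what makes the Taylor expansions quoted just above the statement legitimate.

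For the eigenvalue statement, I would fix an analytic branch $\lambda_{h,j}(t)$ with an analytic eigenvector $v(t) = v_0 + t v^{(1)} + O(t^2)$ satisfying $H(t) v(t) = \lambda_{h,j}(t) v(t)$, and match coefficients in $t$. The order-$t^0$ equation forces $v_0 \in \mathcal{V}_h$, and the order-$t^1$ equation reads
\[
(H - \lambda_h)\, v^{(1)} \;=\; \fo{\lambda_{h,j}}\, v_0 - \Delta v_0.
\]
Since $H$ is Hermitian, the Fredholm alternative requires the right-hand side to lie in $\mathrm{ran}(H - \lambda_h) = \mathcal{V}_h^{\perp}$. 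Applying $P_h$ to both sides therefore collapses the equation to $P_h \Delta P_h\, v_0 = \fo{\lambda_{h,j}}\, v_0$, identifying the $\fo{\lambda_{h,j}}$ as precisely the eigenvalues of the compression $P_h \Delta P_h$ acting on $\mathcal{V}_h$.

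For the projection correction, I would use the contour representation
\[
P_h(t) \;=\; -\frac{1}{2\pi i}\oint_{\Gamma_h} R(z,t)\, dz,
\]
with $\Gamma_h$ a small positively oriented loop separating $\lambda_h$ from the rest of $\spec(H)$. The hypothesis $|t|\,\|\Delta\|_2 < \gamma(H)/2$ keeps $\Gamma_h$ inside the resolvent set of $H(t)$ and guarantees that the eigenvalues of $H(t)$ branching from $\lambda_h$ remain the only ones enclosed. Expanding via the Neumann series
\[
R(z,t) \;=\; R(z,0) - t\, R(z,0)\,\Delta\,R(z,0) + O(t^2)
\]
and substituting the spectral decomposition $R(z,0) = \sum_k P_k / (\lambda_k - z)$ reduces the coefficient of $t$ to a sum of integrals of $P_j \Delta P_k /((\lambda_j - z)(\lambda_k - z))$ around $\Gamma_h$. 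A short residue computation shows the $j = k = h$ term has a double pole with vanishing residue, while the mixed terms with exactly one of $j,k$ equal to $h$ contribute $-P_h \Delta P_k / (\lambda_k - \lambda_h)$ and $-P_j \Delta P_h / (\lambda_j - \lambda_h)$ respectively. Summing and recognizing \cref{eq:resolvent} yields exactly $\fo{P_h} = -(P_h \Delta S_h + S_h \Delta P_h)$.

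The main technical obstacle is the bookkeeping in the degenerate case, where one must be careful that $P_h(t)$ and the branches $\lambda_{h,j}(t)$ are analytic objects at all and that term-by-term integration of the Neumann series is legitimate. Both issues are handled by the quantitative smallness condition $|t|\,\|\Delta\|_2 < \gamma(H)/2$, which makes $\|t R(z,0)\Delta\| < 1$ uniformly on $\Gamma_h$ so the Neumann series converges in operator norm and may be integrated term by term, while simultaneously ensuring $\Gamma_h$ isolates the correct spectral package of $H(t)$. Together with the self-adjointness of $H$ that supplies the orthogonal decomposition $\mathbb{C}^n = \mathcal{V}_h \oplus \mathcal{V}_h^{\perp}$ in the Fredholm step, these two ingredients let both formulas be read directly off the matched Taylor expansions.
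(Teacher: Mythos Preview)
Your argument is correct. Both halves are standard: the Rayleigh--Schr\"odinger matching for the eigenvalue corrections is fine once one invokes Rellich's theorem (implicit in the analyticity already quoted from Kato) to guarantee the existence of analytic eigenvector branches in the degenerate case, and your residue computation for $\fo{P_h}$ from the Riesz integral is accurate.

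As for comparison with the paper: there is nothing to compare. The paper does not prove this statement; it is recorded as a \emph{Fact} with citations to \cite[Theorem II.5.4]{Kato1995PerturbationOperators} for the eigenvalue splitting and to \cite[eq.s I.5.27--29]{Kato1995PerturbationOperators} for the reduced resolvent, and simply quotes the formulae. Your proposal is thus not an alternative route but an actual derivation supplied where the paper gives none. If anything, you might streamline by noting that Kato derives both pieces via the same contour-integral machinery you use for $\fo{P_h}$ (treating the eigenvalue correction through the trace of $H(t)P_h(t)$), so the Rayleigh--Schr\"odinger argument, while perfectly valid and perhaps more transparent for the eigenvalue half, is a stylistic choice rather than a necessity.
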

\begin{remark} \label{rmk:projection_stability}
    We note that \Cref{fact:splitting} implies the magnitude of the perturbation on the projection operators depends on the eigengap $\frac{1}{\gamma(H)}$. Despite the contribution of this perturbation to the bound in \Cref{thm:Lipschiz}, the $\frac{1}{\gamma(H)}$ contribution is canceled out in the calculations in the proof of \Cref{prop:local_bound}. We thus obtain an upper bound on the stability of power spectrum signatures that does not depend on the eigengap. 
\end{remark}
We shall denote the projection onto the first $k$ eigenspaces of $H(t)$ as $P_{[k]}(t)$, which admits for $\|\Delta \|_2 |t| < \frac{\gamma(H)}{2}$ a convergent Taylor series 
\begin{align}
    P_{[k]}(t) = \sum_{h=1}^k P_h(t)  = P_{[k]} + \left(\sum_{h=1}^k \fo{P_h} \right)t + \mathcal{O}(t^2) = P_{[k]} + \fo{P_{[k]}}t + \mathcal{O}(t^2).
\end{align}

In \Cref{lem:projection_series_first_k}, we prove that the first-order correction can be expressed as 
\begin{align}\label{eq:first-ordercorrectionformula}
   \fo{P_{[k]}} :=  \sum_{h=1}^k \fo{P_h} =  \sum_{1 \leq h \leq k <j \leq n}\frac{1}{\lambda_h - \lambda_j} (P_h \Delta P_j + P_j \Delta P_h).
\end{align}
We bound the effect of the perturbation on power spectra by considering the perturbation on the support $\lambda$ and projections $\langle f, P_{[k]} f \rangle$ separately. The first lemma below isolates the effects of the changes $\lambda$ on the Wasserstein distance. 
 
\begin{lemma} \label{prop:taylor_bound_a}
   Consider a Hermitian linear perturbation $H(t) = H + t \Delta$. For any $f \in \C^n$ with $\norm{f}_2= 1$, if $t \|\Delta\|_2 < \gamma(H)/2$,
    \begin{equation}\label{eq:firstWassbound}
        W_1\left(\mu^{H}_f,  \mu^{H(t)}_f\right) \leq  \sum_h (\lambda_{h+1} - \lambda_h) \left| \left\langle f, \left( P_{[h]} - P_{[h]}(t) \right)f\right\rangle   \right|  + t\|\Delta \|_2. 
    \end{equation}
\end{lemma}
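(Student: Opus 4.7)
The plan is to invoke the standard $L^1$ representation of the one-dimensional Wasserstein-1 distance,
$$W_1(\mu,\nu) \;=\; \int_{\R} |F_\mu(x) - F_\nu(x)|\,dx,$$
and to split the transport between $\mu_f^H$ and $\mu_f^{H(t)}$ into two conceptually orthogonal pieces: one that only rearranges mass vertically while keeping the support fixed (accounting for $P_{[h]} \to P_{[h]}(t)$), and one that only slides mass horizontally to the perturbed eigenvalues (accounting for $\lambda_h \to \lambda_h(t)$).

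To carry this out I would introduce the auxiliary probability measure
$$\tilde\mu \;:=\; \sum_h \langle f, P_h(t)\, f\rangle\, \delta_{\lambda_h},$$
which sits on the unperturbed support $\spec(H)$ but carries the perturbed weights. Completeness of $\{P_h(t)\}_h$ together with $\|f\|=1$ makes $\tilde\mu$ a probability measure, and the hypothesis $t\|\Delta\|_2 < \gamma(H)/2$, combined with Weyl's inequality, prevents the eigenvalue groups of $H(t)$ from colliding across distinct eigenvalues of $H$, so the index $h$ unambiguously pairs $\lambda_h$ with the perturbed group $\{\lambda_{h,j}(t)\}_j$. The triangle inequality then gives
$$W_1(\mu_f^H,\mu_f^{H(t)}) \;\leq\; W_1(\mu_f^H,\tilde\mu) \;+\; W_1(\tilde\mu,\mu_f^{H(t)}).$$

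For the second summand, $\tilde\mu$ and $\mu_f^{H(t)}$ carry identical total mass within each perturbed group, so the transport plan that subdivides the atom of $\tilde\mu$ at $\lambda_h$ into sub-pieces of mass $\langle f, P_{h,j}(t) f\rangle$ and routes each one to $\lambda_{h,j}(t)$ is admissible; its cost is at most $\max_{h,j}|\lambda_h - \lambda_{h,j}(t)| \leq t\|\Delta\|_2$ by Weyl. For the first summand, $\mu_f^H$ and $\tilde\mu$ share the support $\{\lambda_h\}_h$, and their cumulative distribution functions are constant on each interval $[\lambda_h,\lambda_{h+1})$ with respective values $\langle f, P_{[h]} f\rangle$ and $\langle f, P_{[h]}(t) f\rangle$, so the $L^1$ formula immediately yields
$$W_1(\mu_f^H,\tilde\mu) \;=\; \sum_h(\lambda_{h+1}-\lambda_h)\,\bigl|\langle f,(P_{[h]}-P_{[h]}(t))f\rangle\bigr|.$$
Adding the two bounds produces \eqref{eq:firstWassbound}.

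The point that requires the most care is the bookkeeping around degenerate eigenvalues: I need to verify that $\langle f, P_{[h]}(t) f\rangle$ is genuinely the $\mu_f^{H(t)}$-mass contained in the first $h$ perturbed groups, and that no mass leaks from one group into a neighbouring one. Both facts follow from the non-crossing regime guaranteed by $t\|\Delta\|_2 < \gamma(H)/2$. As a pleasant byproduct, repeated eigenvalues of $H$ contribute zero-length intervals in the final sum, so degenerate spectra need no separate treatment—a feature that is essential for the ultimate stability estimate \Cref{thm:Lipschiz} to avoid any eigengap factor, as flagged in \Cref{rmk:projection_stability}.
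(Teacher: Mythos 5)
Your proposal is correct and follows essentially the same route as the paper: the auxiliary measure $\tilde\mu$ is exactly the paper's $\mu_f^{\widetilde{H}(t)}$ (unperturbed eigenvalues with perturbed projection weights), followed by the same triangle-inequality split, the same explicit transport plan matching each $\lambda_{h,j}(t)$ with $\lambda_h$ bounded via $|\lambda_{h,j}(t)-\lambda_h|\leq t\|\Delta\|_2$, and the same CDF/$L^1$ computation on the common support for the first term. The only cosmetic difference is that you cite Weyl's inequality explicitly where the paper argues the eigenvalue bound directly from the perturbation hypothesis.
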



\begin{proof}
    Let $\lambda_h$ be the distinct eigenvalues of $H$, and $P_h(t)$ be the Taylor series on the projection operators in~\cref{eq:projection_taylor_series}. Let us define another matrix $\widetilde{H}(t) = \sum_h \lambda_h P_h(t)$, where the eigenvalues do not depend on $t$ while the projections onto the eigenspaces vary in the same way as those of $H(t)$. 
    By the triangle inequality, we have 
    \begin{align}\label{eq:triangineqwasser}
        W_1\left(\mu^{H}_f,  \mu^{H(t)}_f\right) \leq W_1\left(\mu^{H}_f,  \mu^{\widetilde{H}(t)}_f\right)   + W_1\left(\mu^{\widetilde{H}(t)}_f,  \mu^{H(t)}_f\right),
    \end{align}
    where 
    \begin{align*}
        \mu^{\widetilde{H}(t)}_f &= \sum_k \delta(x - \lambda_k) \langle f, P_k(t) f \rangle, \\
        \mu^{{H}(t)}_f &= \sum_h \sum_{j=1}^{m_h} \delta(x - \lambda_{h,j}(t)) \langle f, P_{h,j}(t) f \rangle.
    \end{align*}
Here we recall the notation in \Cref{fact:splitting}, where $\lambda_h$ is an eigenvalue with multiplicity $m_h$, and $\lambda_{h,j}(t)$ are the perturbations of each $j = 1, \ldots, m_h$ eigenvalues that coincide at $\lambda_{h,j}(0) = \lambda_h$. The operator $P_{h,j}(t)$ denotes the projection onto the eigenspace of $H(t)$ corresponding to the eigenvalue  $\lambda_{h,j}(t)$. By definition, $P_h(t) = \sum_{j=1}^{m_h} P_{h,j}(t)$; thus
    \[
        \sum_{j=1}^{m_h} \langle f, P_{h,j}(t) f \rangle = \langle f, \sum_{j=1}^{m_h} P_{h,j}(t) f \rangle = \langle f, P_h(t) f \rangle. 
    \]
    Let us define a transport plan between from $\mu_{\scriptscriptstyle{f}}^{\scriptscriptstyle{H(t)}}$ to $\mu^{\scriptscriptstyle{\widetilde{H}(t)}}_{\scriptscriptstyle{f}}$. Let $T((h,j), k) =  \langle f, P_{h,j}(t) f \rangle \delta_{h,k}$ denote the amount of measure transported from the Dirac measure supported on $\lambda_{h,j}$ in $\mu_{\scriptscriptstyle{f}}^{\scriptscriptstyle{H(t)}}$ to that supported on $\lambda_k$ in $\mu^{\scriptscriptstyle{\widetilde{H}(t)}}_{\scriptscriptstyle{f}}$. Using the equality above, we see that $T$ satisfies the normalization conditions of a transport plan of discrete measures, from $\mu_{\scriptscriptstyle{f}}^{\scriptscriptstyle{H(t)}}$ with support on $\lambda_{(h,j)}$, to $\mu_{\scriptscriptstyle{f}}^{\scriptscriptstyle{\widetilde{H}(t)}}$ with support on $\lambda_k$:
    \begin{align*}
       \sum_{h} \sum_{j = 1}^{m_h}T((h,j), k) &=  \sum_{h} \sum_{j = 1}^{m_h}\langle f, P_{h,j}(t) f \rangle \delta_{h,k} =  \langle f, P_k(t) f \rangle, \\
       \sum_{k}T((h,j), k) &=  \sum_{k} \langle f, P_{h,j}(t) f \rangle \delta_{h,k} =  \langle f, P_{h,j}(t) f \rangle.
    \end{align*} 
Choosing the transport cost between $s, t \in \R$ to be the 1-cost $c(s,t) = |s -t |$, the total transport cost of the transport plan $T$ from $\mu_{\scriptscriptstyle{f}}^{\scriptscriptstyle{H(t)}}$ to $\mu^{\scriptscriptstyle{\widetilde{H}(t)}}_{\scriptscriptstyle{f}}$ is given by 
\begin{align*}
        \sum_{h} \sum_{j = 1}^{m_h} \sum_k |\lambda_{h,j}(t) - \lambda_k|  T((h,j),k)   &=  \sum_{h} \sum_{j = 1}^{m_h} \sum_k |\lambda_{h,j}(t) - \lambda_k|\langle f, P_{h,j}(t) f \rangle \delta_{h,k} \\
        &=  \sum_{h} \sum_{j = 1}^{m_h}  |\lambda_{h,j}(t) - \lambda_h|\langle f, P_{h,j}(t) f \rangle .
    \end{align*}
    
   Since $W_1(\mu^{\scriptscriptstyle{\widetilde{H}(t)}}_{\scriptscriptstyle{f}}, \mu^{\scriptscriptstyle{H(t)}}_{\scriptscriptstyle{f}})$ is the infimum of the transport cost over all possible transport plans, we use the transport cost above to provide an upper bound:
    \begin{align*}
        W_1\left(\mu^{\widetilde{H}(t)}_f,  \mu^{H(t)}_f\right) 
        &\leq    \sum_{h} \sum_{j = 1}^{m_h}  |\lambda_{h,j}(t) - \lambda_h|\langle f, P_{h,j}(t) f \rangle .
    \end{align*}
    However, since the size of the perturbation $t \|\Delta\|_2 < \gamma(H)/2$ is bounded above by half the eigengap of $H$, we can bound the perturbation size on the eigenvalues by the perturbation size on the matrix  $|\lambda_{h,j}(t) - \lambda_h| \leq t \|\Delta\|_2$. Substituting this bound into the latest inequality and applying some inner product properties gives us
    \begin{align*}
        W_1\left(\mu^{\widetilde{H}(t)}_f,  \mu^{H(t)}_f\right) 
        &\leq    \sum_{h} \sum_{j = 1}^{m_h}  |\lambda_{h,j}(t) - \lambda_h|\langle f, P_{h,j}(t) f \rangle\\
        &\leq  t\norm{\Delta}_2  \left \langle f,\sum_h \sum_{j = 1}^{m_h} P_{h,j}(t) f\right\rangle \\
        &= t\norm{\Delta}_2  \left \langle f,f\right\rangle = t\norm{\Delta}_2 .
    \end{align*}
    In the penultimate equality we used the fact that the total projection $\sum_h \sum_{j = 1}^{m_h} P_{h,j}(t)$ onto the eigenspaces of the Hermitian matrix $H + t\Delta$ is the identity, and in the last line, we used the fact that $f$ has unit norm. 
    
    We now turn to developing an expression for the first term $W_1(\mu^{\scriptscriptstyle{\widetilde{H}(t)}}_{\scriptscriptstyle{f}}, \mu^{\scriptscriptstyle{H(t)}}_{\scriptscriptstyle{f}})$ of~\cref{eq:triangineqwasser}. We first observe that $\mu^{\scriptscriptstyle{H(t)}}_{\scriptscriptstyle{f}}$ and $\mu^{\scriptscriptstyle{\widetilde{H}(t)}}_{\scriptscriptstyle{f}}$ share the same support. If we consider two discrete measures $\mu(x) = \sum_h \mu_h \delta(x - \lambda_h)$ and $\nu(x) = \sum_h \nu_h \delta(x - \lambda_h)$, then 
    \begin{align*}
        W_1(\mu, \nu) 
        &= \int_{-\infty}^\infty |F_\mu(x) - F_{\nu}(x)|\dd{x}
        = \sum_h \int_{\lambda_h}^{\lambda_{h+1}} |F_\mu(x) - F_{\nu}(x)|\dd{x} \\
        &= \sum_h (\lambda_{h+1} - \lambda_h) \left| \sum_{j \leq h} \mu_j  - \sum_{j \leq h}  \nu_j \right|.
    \end{align*}
    Employing this result for $\mu_f^H = \sum_h \lambda_h \langle f, P_hf \rangle $ and $\mu_f^{\widetilde{H}(t)} = \sum_h \lambda_h \langle f, P_h(t)f \rangle $, we get
    \begin{align*}
            W_1\left(\mu^{H}_f,  \mu^{\widetilde{H}(t)}_f\right)
            &= \sum_h (\lambda_{h+1} - \lambda_h) \left| \sum_{j \leq h} \langle f, P_j f\rangle   - \sum_{j \leq h}  \langle f, P_j(t) f\rangle   \right| \\
            &= \sum_h (\lambda_{h+1} - \lambda_h) \left| \left\langle f, \left(\sum_{j \leq h} P_j - \sum_{j \leq h}P_j(t) \right)f\right\rangle   \right| \\
            &= \sum_h (\lambda_{h+1} - \lambda_h) \left| \left\langle f, \left( P_{[h]} - P_{[h]}(t) \right)f\right\rangle   \right| 
    \end{align*}
    We obtain the stated result by putting together the two individual bounds in the triangle inequality.
\end{proof}
Having accounted for the effect of the eigenvalue perturbations in \Cref{prop:taylor_bound_a}, we bound the contributions due to variations in the projection operator. We refine the bound (\ref{eq:firstWassbound}) by deriving upper estimates for the term 
 \[\sum_h (\lambda_{h+1} - \lambda_h) \left| \left\langle f, \left( P_{[h]} - P_{[h]}(t) \right)f\right\rangle  \right| .  \] 
 In particular, we obtain a leading order expression of this expression for small $t$. 

\begin{proposition}\label{prop:local_bound} Consider again the linear Hermitian perturbation of an $n \times n$ Hermitian matrix $H(t) = H + t \Delta$. Let $\gamma(H)$ be as defined in Equation (\ref{eq:spectral_gap}). If $\|\Delta \|_2  |t| <  \frac{\gamma(H)}{2}$, then 
    \begin{equation}\label{eq:local_bound}
        W_1(\mu_f^{H}, \mu_f^{H(t)}) \leq n\|\Delta \|_2 |t| + \mathcal{O}((t\|\Delta \|_2)^2).
    \end{equation}
\end{proposition}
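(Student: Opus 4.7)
The plan is to start from the bound already established in \Cref{prop:taylor_bound_a}, namely
\[
W_1\left(\mu^{H}_f,\mu^{H(t)}_f\right) \;\leq\; \sum_h (\lambda_{h+1} - \lambda_h) \bigl| \langle f, (P_{[h]} - P_{[h]}(t))f\rangle  \bigr|  + t\|\Delta\|_2,
\]
and control the cumulative projection defect using the first-order Taylor expansion \eqref{eq:first-ordercorrectionformula} of $P_{[h]}(t)$. Since $\|\Delta\|_2 |t| < \gamma(H)/2$, the Taylor series converges, and writing $P_{[h]}(t) - P_{[h]} = t\,\fo{P_{[h]}} + \mathcal{O}(t^2)$ reduces the task to bounding, to leading order in $t$, the expression
\[
\sum_h (\lambda_{h+1} - \lambda_h)\, \Bigl|\sum_{1\le i \le h < j \le n} \tfrac{1}{\lambda_i - \lambda_j}\, b_{ij}\Bigr|, \qquad b_{ij} := \langle f,(P_i \Delta P_j + P_j \Delta P_i) f\rangle.
\]

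The key step, which is also the main obstacle, is to eliminate the eigengap denominators $1/(\lambda_j-\lambda_i)$ in the leading-order term. I would first apply the triangle inequality inside the outer sum, then swap the order of summation so that the telescoping identity
\[
\sum_{h\,:\,i \le h < j}(\lambda_{h+1}-\lambda_h) \;=\; \lambda_j - \lambda_i
\]
exactly cancels the $1/(\lambda_j - \lambda_i)$ factor. This produces the clean upper bound $\sum_{i<j}|b_{ij}|$, confirming the mechanism alluded to in \Cref{rmk:projection_stability}: although each individual $\fo{P_{[h]}}$ depends on the eigengap, the gap cancels after summing against the weights $\lambda_{h+1}-\lambda_h$.

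It remains to estimate $\sum_{i<j}|b_{ij}|$ uniformly in $f$. Writing $b_{ij} = 2\,\mathrm{Re}\langle P_i f, \Delta P_j f\rangle$ (using self-adjointness of $P_i,P_j$ and of $\Delta$) and applying Cauchy--Schwarz yields $|b_{ij}| \le 2\|\Delta\|_2\, v_i v_j$ where $v_i := \|P_i f\|_2$. Since $\sum_i v_i^2 = \|f\|_2^2 = 1$ and there are at most $n$ distinct eigenvalues, a second application of Cauchy--Schwarz gives $\sum_i v_i \le \sqrt{n}$, hence
\[
\sum_{i<j} v_i v_j \;=\; \tfrac12\Bigl[\bigl(\textstyle\sum_i v_i\bigr)^2 - \sum_i v_i^2\Bigr] \;\le\; \tfrac{n-1}{2}.
\]
Therefore $\sum_{i<j}|b_{ij}| \le (n-1)\|\Delta\|_2$, which combined with \Cref{prop:taylor_bound_a} gives
\[
W_1(\mu_f^{H}, \mu_f^{H(t)}) \;\leq\; (n-1)\|\Delta\|_2 |t| + t\|\Delta\|_2 + \mathcal{O}((t\|\Delta\|_2)^2) \;=\; n\|\Delta\|_2 |t| + \mathcal{O}((t\|\Delta\|_2)^2),
\]
as required. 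The only subtlety left to double-check is that the $\mathcal{O}(t^2)$ remainders from the Taylor expansions of the $P_{[h]}(t)$ aggregate into an $\mathcal{O}((t\|\Delta\|_2)^2)$ contribution when weighted by $(\lambda_{h+1}-\lambda_h)$; this follows from uniform convergence of the Taylor series on $|t| < \gamma(H)/(2\|\Delta\|_2)$ together with the telescoping identity used above, applied to the second-order terms.
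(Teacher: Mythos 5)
Your proposal is correct and takes essentially the same route as the paper's proof: starting from \Cref{prop:taylor_bound_a}, expanding $P_{[h]}(t)$ to first order via \eqref{eq:first-ordercorrectionformula}, cancelling the eigengap denominators by the telescoping identity $\sum_{h:\, i \leq h < j}(\lambda_{h+1}-\lambda_h) = \lambda_j - \lambda_i$ (which the paper isolates as \Cref{lem:cancellation_first_order}), and then applying Cauchy--Schwarz twice to reach the $(n-1)\|\Delta\|_2$ bound on the projection term. The only cosmetic difference is that you derive the telescoping step and the final Cauchy--Schwarz count directly rather than citing the paper's auxiliary lemmas, and you phrase the cross terms as $2\,\mathrm{Re}\langle P_i f, \Delta P_j f\rangle$ instead of splitting the two summands and reindexing over ordered pairs.
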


\begin{proof}
As aforementioned, we build upon the inequality in~\Cref{prop:taylor_bound_a}, focusing on the first term in the bound. Substituting the Taylor expansion for $P_{[h]}(t)$ at $t = 0$, we obtain
\begin{align*}
    \sum_h (\lambda_{h+1} - \lambda_h) \left| \left\langle f, \left( P_{[h]} - P_{[h]}(t) \right)f\right\rangle   \right| &=  \left(\sum_h (\lambda_{h+1} - \lambda_h) \left| \left\langle f,  \fo{P_{[h]}} f\right\rangle   \right|\right) t  + \order{t^2}.
\end{align*}
Then, using the expression for the first order term of the Taylor expansion of $P_{[k]}$ provided in \Cref{lem:projection_series_first_k}, we get
\begin{align*}
    \sum_k (\lambda_{k+1} - \lambda_k) \left| \left\langle f,  \fo{P_{[k]}} f\right\rangle   \right| &= \sum_k (\lambda_{k+1} - \lambda_k)\sum_{ h \leq k <j }\frac{1}{\lambda_h - \lambda_j} \left| \left\langle f,(P_h \Delta P_j + P_j \Delta P_h)f\right\rangle  \right| \\
    &= \sum_{h <j}(\lambda_{h} - \lambda_j) \frac{1}{\lambda_h - \lambda_j} \left| \left\langle f,(P_h \Delta P_j + P_j \Delta P_h)f\right\rangle  \right| \\
    &=\sum_{h <j } \left| \left\langle f,(P_h \Delta P_j + P_j \Delta P_h)f\right\rangle  \right| \\
    &\leq \sum_{h <j} \left| \left\langle f,(P_h \Delta P_j )f\right\rangle  \right| + \sum_{h <j } \left| \left\langle f,(P_j \Delta P_h)f\right\rangle  \right| \\
    &= \sum_{j \neq h} \left| \left\langle f,(P_h \Delta P_j )f\right\rangle  
    \right|.
\end{align*}
In the second equality, we made use of a technical observation to simplify the summation, which we derive in \Cref{lem:cancellation_first_order}.  In the final equality, we used the fact that $P_h$ are orthogonal projections and are thus self-adjoint, and reordered the indices. Thus, we obtain
\[
    \sum_k (\lambda_{k+1} - \lambda_k) \left| \left\langle f,  \fo{P_{[k]}} f\right\rangle   \right| 
    = \sum_{j \neq h} \left|  (P_h f)^\ast \Delta  (P_j f) \right|. 
\]
 We now bound the term we have obtained. Letting $c_h = \norm{P_h f}_2$, we first observe that $\sum_h c_h^2 = 1$ since the $P_h$'s are complete orthogonal projections. Letting $c$ be the vector of the norm of projections, we have 
\begin{align*}
    \sup_{\norm{f}_2 = 1}\sum_{j \neq h} \left|  (P_h f)^\ast \Delta  (P_j f) \right | & \leq  \sup_{\norm{f}_2 = 1}\sum_{j \neq h} \norm{P_hf}_2 \norm{\Delta}_2 \norm{P_j f}_2 =  \left(\sup_{\substack{\norm{c}_2 = 1 \\ c \geq 0}}\sum_{j \neq h}c_h c_j \right) \norm{\Delta}_2
\end{align*}
Supposing there are $m \leq n$ distinct eigenvalues, then the term in the bracket is bounded above by $n -1$, as we can write
\begin{align*}
\sum_{j \neq h}c_h c_j 
&= \sum_{j,h}c_h c_j  - \sum_j c_j^2  
=  \left(\sum_{j}c_j\right)^2  - \sum_j c_j^2 \\
&\leq \left(\sum_{j}c_j^2\right) \left(\sum_{j} 1 \right)  - \sum_j c_j^2 = m -1 
\end{align*}
using the Cauchy-Schwartz inequality and the observation on the sums of $c_j^2$. It follows that
\begin{align*}
\sum_{j \neq h}c_h c_j \leq n-1.
\end{align*}
Putting everything together, we have 
\begin{align*}
    \sum_k (\lambda_{k+1} - \lambda_k) \left| \left\langle f,  \fo{P_{[k]}} f\right\rangle   \right| \leq(n-1)\norm{\Delta}_2. 
\end{align*}
Substituting this into the bound obtain in \Cref{prop:taylor_bound_a}, we have
\begin{align*}
        W_1\left(\mu^{H}_f,  \mu^{H(t)}_f\right) 
        &\leq |t|\|\Delta \|_2 + \sum_h (\lambda_{h+1} - \lambda_h) \left| \left\langle f, \left( P_{[h]} - P_{[h]}(t) \right)f\right\rangle   \right|  \\
        &= |t|\|\Delta \|_2 + \sum_h (\lambda_{h+1} - \lambda_h) \left| \left\langle f, \left( \sum_{p =1}^\infty t^p P_{[h]}^{(p)}\right)f\right\rangle   \right|  \\
        &\leq |t|\|\Delta \|_2 + \sum_h (\lambda_{h+1} - \lambda_h) \left( \left| t \left\langle f, P_{[h]}^{(1)}f\right\rangle   \right| + \left|  \sum_{p =2}^\infty t^p \left\langle f, P_{[h]}^{(p)}f\right\rangle   \right|  \right) \\
        &\leq  |t|\|\Delta \|_2+ |t|\sum_h (\lambda_{h+1} - \lambda_h) \left| \left\langle f, \fo{P_{[h]}}f\right\rangle   \right|   + t^2 \sum_h (\lambda_{h+1} - \lambda_h) \left|  \sum_{p =0}^\infty t^p \left\langle f, P_{[h]}^{(p+2)}f\right\rangle   \right|   \\
        &\leq |t| \norm{\Delta}_2 + |t|(n-1) \norm{\Delta}_2  + \order{t^2} \\
        &= n|t| \norm{\Delta}_2 + \order{t^2}.
    \end{align*}
\end{proof}

Finally, we prove our main result, \Cref{thm:Lipschiz} by applying \Cref{lem:local_to_global_bound} to extend the local convergence bound \Cref{prop:local_bound} to a global Lipschitz continuity result. 

\begin{proof}(Theorem \ref{thm:Lipschiz})
    Consider the $\C$-vector subspace of $n \times n$ Hermitian matrices. We equip this vector space with the metric induced by the operator two-norm. Within the open ball around the Hermitian matrix $H$ with radius $\gamma(H)/2$, we have 
    \begin{align*}
         W_1(\mu_f^{H}, \mu_f^{H'}) \leq n \norm{H - H'}_2 + \mathcal{O}(\norm{H - H'}_2^2)
    \end{align*}
    due to \Cref{prop:local_bound}. Thus \Cref{lem:local_to_global_bound} implies 
    \begin{align*}
         W_1(\mu_f^{H}, \mu_f^{H'}) \leq n \norm{H - H'}_2.
    \end{align*}
    for any pair of Hermitian matrices $H$ and $H'$.
\end{proof}


\section{Applications}\label{sec:applications}

We consider applications of power spectrum signatures as vertex features to unsupervised and supervised learning problems. In~\Cref{subsec:unsupervisedlearning}, we consider applications to point-cloud clustering, where we use the power spectrum signature at points to identify points that are related by approximate isometries. In~\Cref{subsec:graphregression}, we consider its application to graph regression problems, where we see that including power spectrum signatures of vertices as additional features improves the performance of graph neural network models on benchmark problems.

\subsection{Unsupervised learning: shape-oriented clustering in point cloud data}\label{subsec:unsupervisedlearning}

We consider the application of our method in the context of unsupervised learning on point-cloud data. We seek to group together points that characterize similar structures in point cloud data. 

Given a point cloud $\X \subset \mathbb{R}^n$, we characterize the geometry of the point cloud using the \emph{diffusion operators} on point clouds described by~\cite{Coifman2006DiffusionMaps}, which describes the geometry of the point cloud in a way that is invariant with respect to isometric transformations on the point cloud, such as rotations and reflections. Given a matrix $S$ associated with the diffusion operator, we consider the power spectrum signature $\mu^S_x$ of each point $x \in \X$. We say points are \emph{similar in spectra} if their power spectrum signatures are close in Wasserstein distance. 

We give an overview of how power spectrum signature $\mu^S_x$ are computed. Given some finite point cloud $\mathfrak{X} \subset \mathbb{R}^n$, we consider diffusion operators based on the Gaussian kernel described in~\cref{eq:rbf}. The associated matrices (given in~\cref{eq:diffmap_a}) are parametrized by two hyper-parameters $\alpha, \epsilon$, which we describe below. In our experiments, we consider $\alpha = \frac{1}{2}$.  We then produce power spectra using the following pipeline:
\begin{enumerate}
    \item Fixing $\epsilon$ and $\alpha$, compute the matrix $S$ (\cref{eq:diffmap_a}) from the distance matrix on the point-cloud.
    \item For each point $x \in \X$, compute its power spectrum signature  $\mu^S_x$.
    \item Represent the power spectra signature at each point as quantile vectors. This assigns a vector in Euclidean space to each point in the point cloud.
\end{enumerate}
Having obtained the collection of quantile vectors, we perform PCA to learn features from the quantile vectors, and also perform clustering to identify groups of points that are close under Euclidean distances between the quantile vectors. Since the Euclidean distance between quantile vectors approximates the 2-Wasserstein distance between measures, points with similar quantile vectors have similar power spectra. In the experiments below, we take each quantile vector to be a 1000-dimension-long vector. The clustering algorithm used is DBSCAN. 

Before we give a geometric motivation for studying points that are similar in spectra, we give the technical background with a summary of the construction of diffusion operators, detailed in~\cite{Coifman2006DiffusionMaps}. Using a positive semi-definite kernel $k: X \times X \to [0, \infty)$ on a probability measure space $(X,\mu)$, we first define a weight on each point $\omega(x) = \int k(x,y) d\mu(y)$, and form a new kernel 
\begin{equation}
    \tilde{k}(x,y) = \frac{k(x,y)}{\omega(x)^\alpha \omega(y)^\alpha}
\end{equation}
parametrized by $\alpha \in [0,1]$. 
We then obtain a diffusion operator $\mathcal{A} f(x) = \int A(x,y) f(y) d\mu(y)$, where
\begin{align}
    A(x,y) &= \frac{\tilde{k}(x,y)}{\nu(x)} \quad \text{and}\\
    \nu(x) &= \int \tilde{k}(x,y) d\mu(y).
\end{align}
We then consider the symmetric function 
\begin{equation} \label{eq:diffmap_a}
S(x,y) = A(x,y) \sqrt{\frac{\nu(x)}{\nu(y)}} = \frac{\tilde{k}(x,y)}{\sqrt{\nu(x) \nu(y)}},
\end{equation}
and note that the operator $\mathcal{S} f (x) = \int S(x,y) f(y) d\mu(y)$ and the $\mathcal{A}$ have the same eigenvalues: 
\begin{equation*}
    \mathcal{A} f = \lambda f \iff  \mathcal{S} (\sqrt{\nu} f) = \lambda (\sqrt{\nu} f). 
\end{equation*}
Since $\mathcal{A}$ is a Markov operator, we have $|\lambda | \leq 1$. We note that the eigenfunctions of $\mathcal{A}$ with eigenvalue $\lambda = 1$ correspond to the stationary distributions of the Markov process, as they satisfy $\int A f d \mu = f$. Furthermore, because $S$ is constructed from a positive semi-definite kernel $k$ in the manner described by~\cref{eq:diffmap_a}, $S$ itself is also positive semi-definite, and as such the eigenvalues are all real and non-negative. Hence the eigenvalues of $\mathcal{S}$ and $\mathcal{A}$ lie between $[0,1]$.

In~\cite{Coifman2006DiffusionMaps}, it was shown that if $\X$ is a Riemannian manifold of $\mathbb{R}^n$, and 
\begin{equation} \label{eq:rbf}
    k(x,y) = \exp\left(-\frac{\|x-y\|}{2\epsilon^2}^2\right),
\end{equation}
then in the asymptotic limit $\epsilon \to 0$, the operator $\mathcal{A}$ recovers diffusion by the Laplace-Beltrami operator and a Fokker-Planck operator respectively for $\alpha = 1, \frac{1}{2}$ with order $\epsilon$ convergence rate. 

Consider then the case where $\X$ is a finite set. If there is a permutation matrix $P$ such that $\|S - P^T S P \|_2 $ is small, then by virtue of \Cref{rmk:approx_sym}, we expect the power spectrum of points related by such a permutation to be close under Wasserstein distances. As such, points in the same orbit of an approximate isometry are similar in spectra. 

In the following datasets, we show that the power spectra of indicator functions with respect to diffusion map operators can distinguish structurally different features in a point cloud, and gather together points that are related by approximate global symmetries of the underlying shape of the point cloud. 

\subsubsection{Dataset description}

\paragraph{\emph{Torus}}

We sample 5000 points uniformly on a torus embedded in three dimensions. We choose a torus with major radius (distance from center of torus to center of tube) $R = 1$ and minor radius $r = \frac{1}{4}$. Note that the embedding has a cylindrical symmetry. 

We consider diffusion map matrix $S$, varying  $\epsilon = 0.5, 1.0, 1.5$ while fixing $\alpha = \frac{1}{2}$.

\begin{figure}[h]
    \centering
    \includegraphics[width=0.975\linewidth]{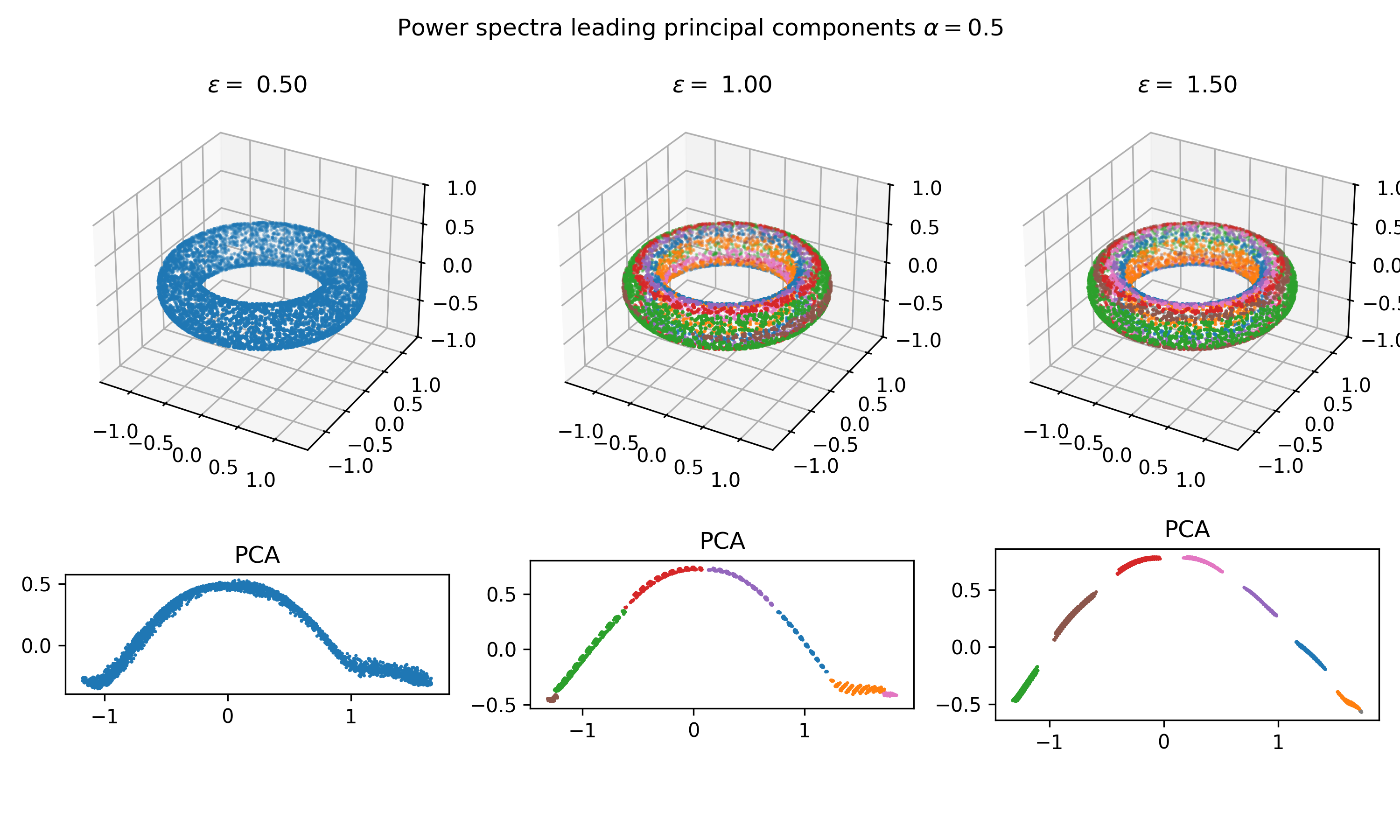}
    \caption{\emph{Torus}. We use PCA to obtain a low-dimensional projection of the quantile vectors of power spectra. We vary the matrix $S_{\alpha, \epsilon}$ from $\epsilon = 0.5, 1.0, 1.5$ while fixing  $\alpha = \frac{1}{2}$. We use the DBSCAN algorithm to cluster the quantile vectors, and color the points by their cluster affiliation. Note DBSCAN groups the whole torus into one cluster.}
    \label{fig:Torus_PS_clusters}
\end{figure}

\paragraph{\emph{Cyclo-Octane Conformations}}
\begin{figure}[h]
    \centering
    \includegraphics[width=0.975\linewidth]{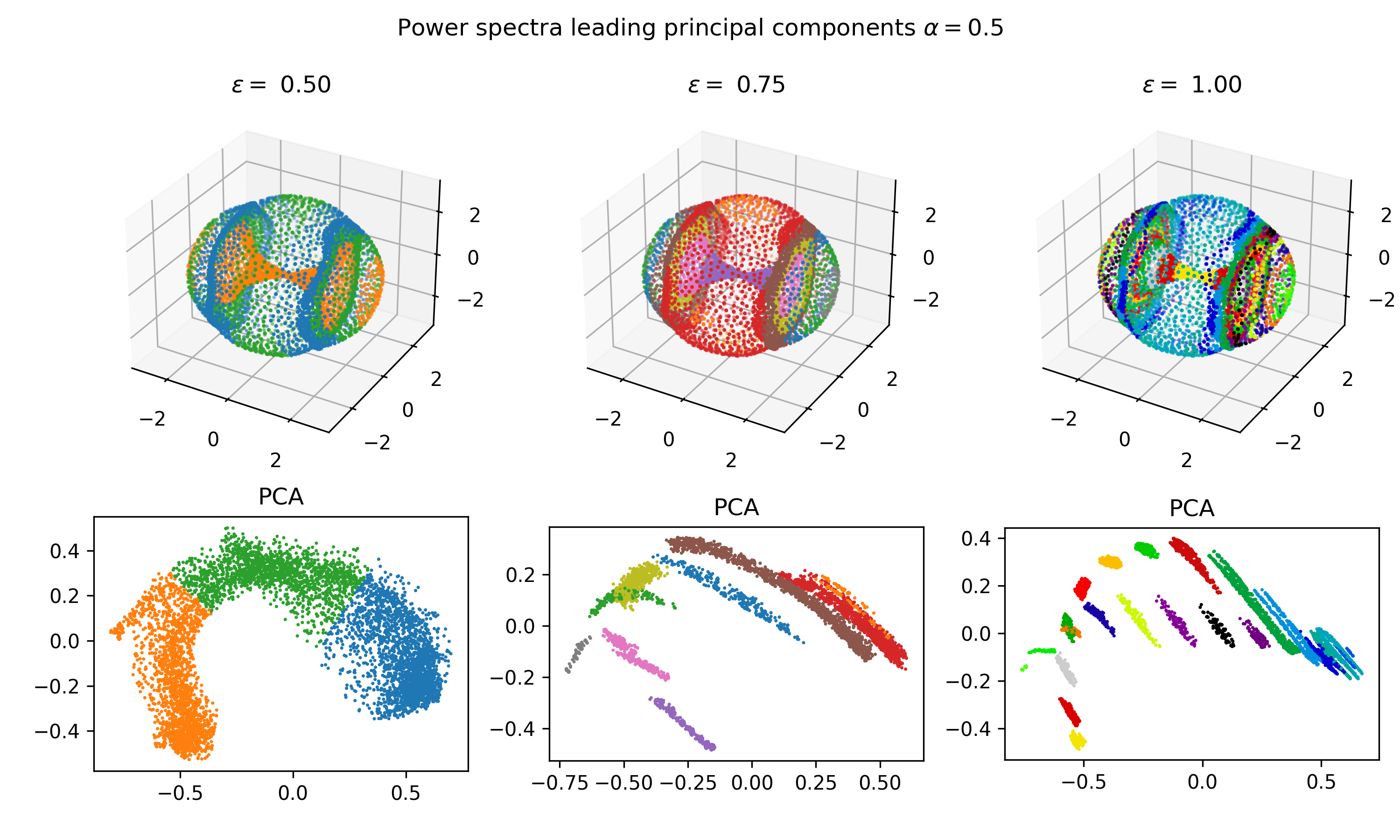}
    \caption{\emph{Cyclo-octane}. We use PCA to obtain a low-dimensional projection of the quantile vectors of power spectra. We vary the matrix $S_{\alpha, \epsilon}$ from $\epsilon = 0.5, 0.75, 1.0$ while fixing  $\alpha = 1/2$. We use the DBSCAN algorithm to cluster the quantile vectors, and color the points by their cluster affiliation. Note that different clusters may appear to overlap in the PCA visualization as the clustering is  }
    \label{fig:CO_PS_clusters}
\end{figure}

We apply our technique to examine the conformation space of the cyclo-octane molecule $C_8H_{16}$. Cyclo-octane is a hydrocarbon chain in which the carbon atoms are arranged in an octagon. We describe the conformation of the molecule by the collective positions of each carbon atom in $\mathbb{R}^3$. This sends each conformation of cyclo-octane to a point in $\mathbb{R}^{3 \times 8}$. We use the subset of the dataset generated by~\cite{Martin2010TopologyLandscape.,Martin2011Non-manifoldData}, subsampled down to 6040 points (we use the same subset as the one used in~\cite{Stolz2020GeometricData}). Details of how the set of conformations is sampled can be found in~\cite{Martin2011Non-manifoldData}. In~\cite{Martin2010TopologyLandscape.}, they empirically verified that the cyclo-octane conformation space is a stratified space, consisting of a Klein bottle attached to a two-dimensional sphere along two circles. In the 3D Isomap low-dimensional visualizations in our figures, the ``hourglass'' corresponds to the Klein bottle component of the conformation space, which cannot be embedded in $\mathbb{R}^3$. 

We consider diffusion map operators $S$, varying $\epsilon = 0.5$, $0.75$, $1.0$ while fixing $\alpha = 1/2$.

\subsubsection{Results and Discussion}

\begin{figure}[h!]
    \centering
    \includegraphics[width=0.975\linewidth]{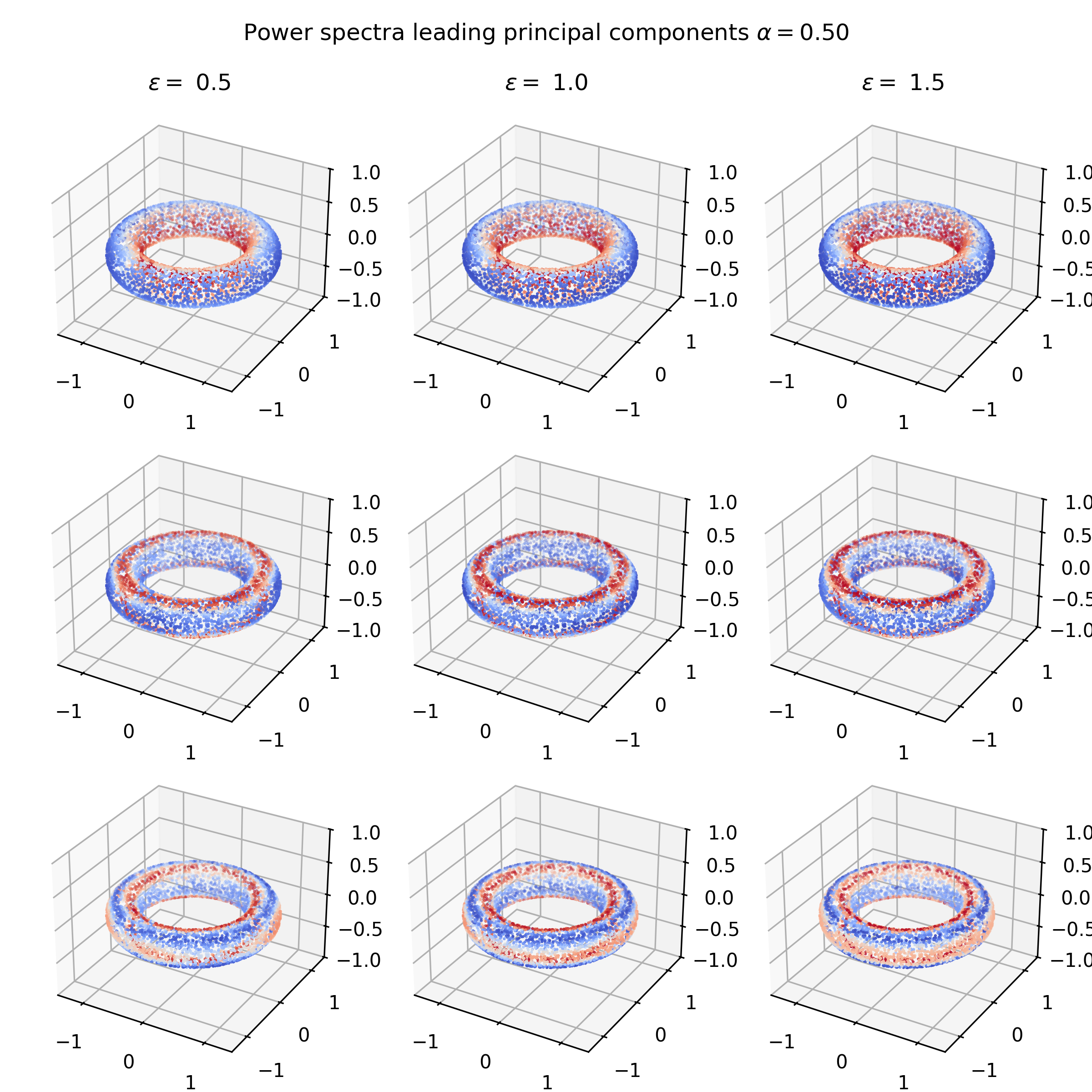}
    \caption{\emph{Torus}. We plot the principal components of the quantile vectors derived from power spectra. Here the quantile vectors are obtained from the vertex power spectra of $S_{\alpha, \epsilon}$ from $\epsilon = 0.5, 1.0, 1.5$ while fixing  $\alpha = \frac{1}{2}$. }
    \label{fig:Torus_PS_pca}
\end{figure}

\begin{figure}[h!]
    \centering
    \includegraphics[width=0.975\linewidth]{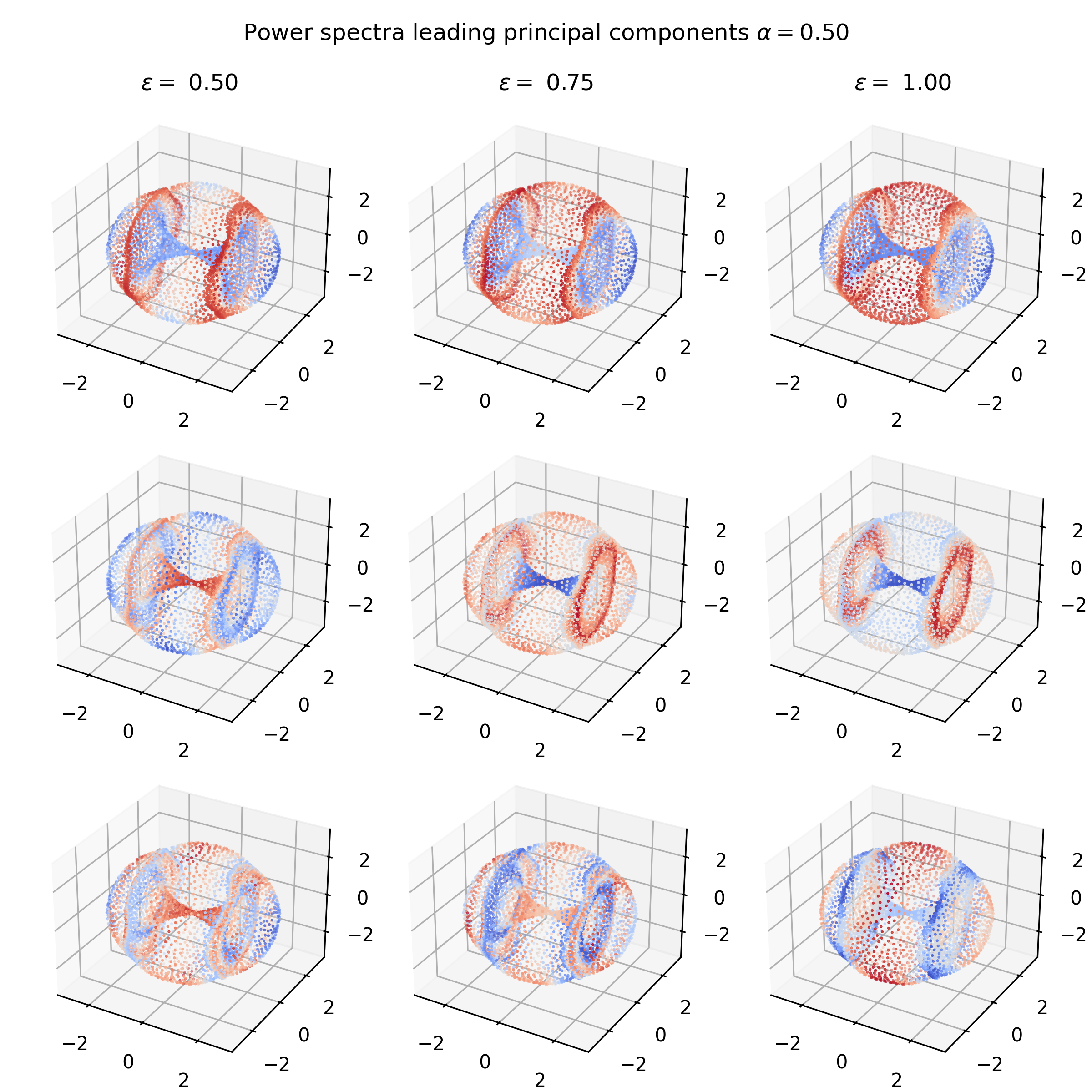}
    \caption{\emph{Cyclo-octane}. We plot the principal components of the quantile vectors derived from power spectra.  Here the quantile vectors are obtained from the vertex power spectra of $S_{\alpha, \epsilon}$ from $\epsilon = 0.5, 0.75, 1.0$ while fixing  $\alpha = \frac{1}{2}$. }
    \label{fig:CO_PS_pca}
\end{figure}

\begin{figure}[h!]
\centering
\includegraphics[width=0.75\linewidth]{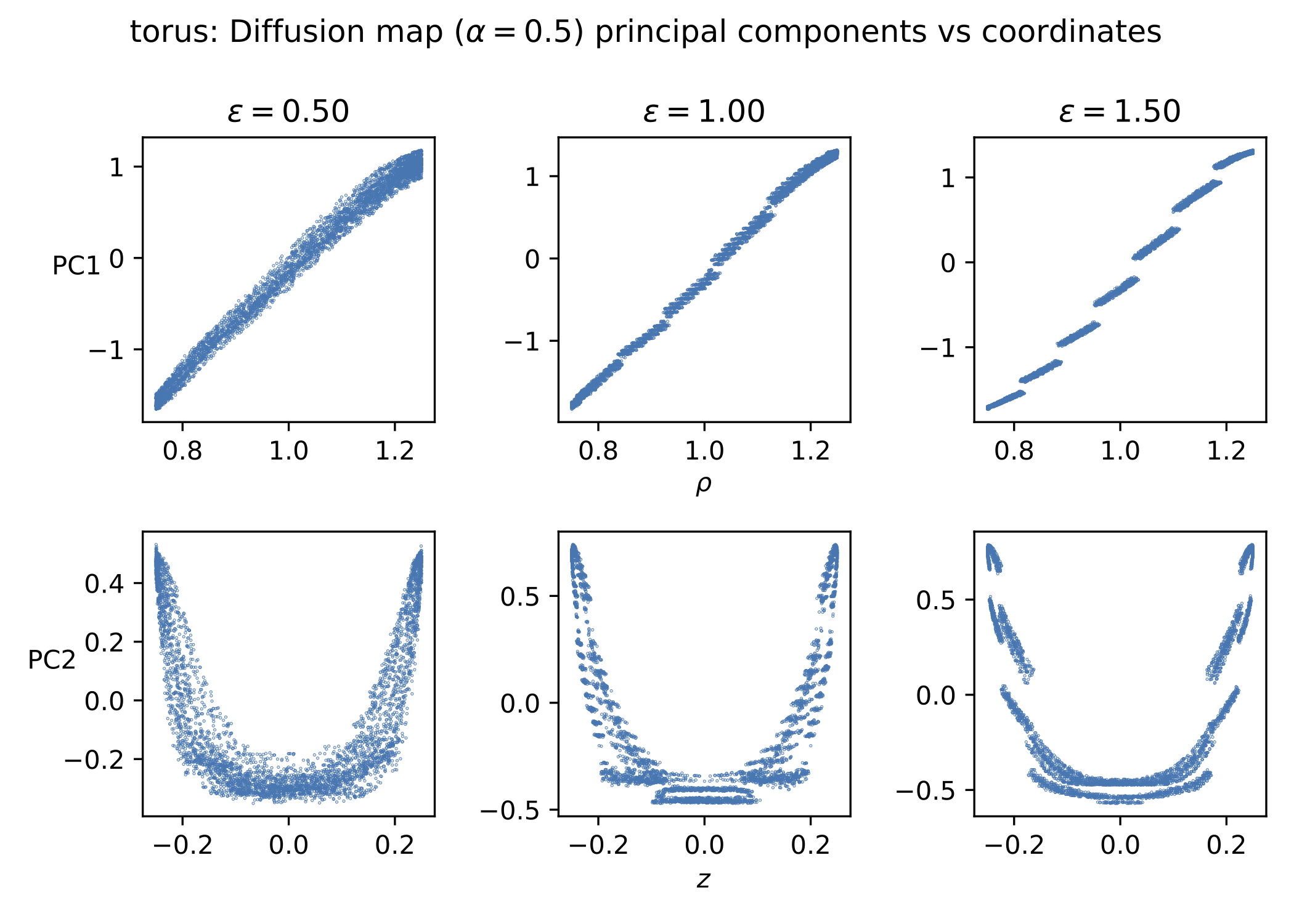}
    \caption{\emph{Torus}. We plot the principal components of the quantile vectors derived from power spectrum signatures at each point, against the cylindrical radius $\rho$ from the $z$-axis, and the $z$-coordinate of the points themselves as they are embedded in $\R^3$.  Here the quantile vectors are obtained from the vertex power spectra of $S_{\alpha, \epsilon}$ from $\epsilon = 0.5, 0.75, 1.0$ while fixing  $\alpha = \frac{1}{2}$. We note that the first principal component PC1 effectively learns $\rho$, while the second principal component PC2 correlates with $|z|$. Note that for the same $z$ value, there are two PC2 branches. This reflects the fact that the level sets of $z$ on the torus consist of two concentric circles of differing radii, and the asymmetry between the inner and outer circles contributes to the different PC2 coordinates. }
    \label{fig:torus_PS_pca_xyz}
\end{figure}

We first contextualize the results of our clustering analysis, illustrated in \Cref{fig:CO_PS_clusters,fig:Torus_PS_clusters}, with a description of the eigenvalues of the matrix $S$ as $\epsilon$ varies. If $\epsilon$ is large relative to the diameter of the dataset, then most kernel entries $k_\epsilon(x,y)$ approach one, and the entries in the matrix $S_{\epsilon, \alpha}$ become increasingly uniform. As such, most eigenvalues become concentrated near zero. This is illustrated in both the torus and cyclo-octane datasets in~\Cref{fig:TorusdiffmapEVs,fig:COdiffmapEVs}. As such, any power spectra is effectively concentrated on a few atoms, and the resultant quantile vectors are effectively parametrized by a few parameters. We also give a geometric interpretation: the Gaussian kernel (\cref{eq:rbf}) is less sensitive to features on a scale $\lesssim \epsilon $, so increasing $\epsilon$ has the effect of obtaining a coarse-grained summary of the geometry in the diffusion operator $S_{\frac{1}{2}, \epsilon}$. Since only large-scale features are effectively retained, the eigendecomposition has less information content, and the spectrum becomes increasingly degenerate. 

This has a downstream effect on the clusters obtained via the quantile vectors. In \Cref{fig:Torus_PS_clusters,fig:CO_PS_clusters}, we visualize the power spectra of each point using the leading two principal components on the quantile vectors. We cluster the quantile vectors using the DBSCAN algorithm. Observe that as we increase $\epsilon$, more distinct clusters emerge in the power spectra. We attribute this transition from continuum-like to discrete collective behavior to the spectrum of $S_{\alpha, \epsilon}$ effectively being supported on fewer eigenvalues as $\epsilon$ increases, which is illustrated in \Cref{fig:CO_PS_clusters}. As such, the power spectra $\sum_{\lambda } \langle f,  P_\lambda f\rangle \delta(t - \lambda)$ of vertex functions are dictated by fewer parameters $ \langle f,  P_\lambda f\rangle $. 

We make some specific comments on the datasets. We first consider the torus. Our embedding of the torus into $\R^3$ is cylindrically symmetric about the $z$-axis, and has a mirror symmetry about the $xy$-plane. The equivalence classes of these symmetries are parametrized simply by the cylindrical radial distance $\rho$ from the $z$-axis. We observe from \Cref{fig:Torus_PS_clusters} that features from the power spectrum signatures respect the cylindrical symmetry of the torus. When clustering points based on their power spectrum signature, points that are in the orbit of the rotation about the $z$-axis are clustered together, with more distinct clusters emerging as $\epsilon$ increases. In addition, the first principal components of the power spectrum signatures correlate with the cylindrical radial distance $\rho$ of the torus embedding, while the second principal correlates with the distance to the $xy$ plane of symmetry (see \Cref{fig:torus_PS_pca_xyz,fig:Torus_PS_pca}). In contrast, the eigenvectors of the diffusion map do not respect the symmetries of the torus (\Cref{fig:TorusdiffmapEVecs}). 

In the case of cyclo-octane, we observe that the clusters identified effectively segment the space into the Klein bottle and the spherical components. In the case of $\epsilon = 0.5$, the quantile vectors are not segmented into clear clusters; nonetheless, the segmentation by the DBSCAN algorithm manages to separate the spherical and Klein bottle components and identify the two singular circles where the Klein bottle and the sphere meet. Note that since the clusters are not clear-cut, the algorithm confuses the extremal spherical caps with the Klein bottle. As we increase $\epsilon$, the finer components emerge, and spherical caps are further segmented by their distance to the attachment circles. The Klein bottle itself is also further segmented by its distance to the attachment circles too. 

In \Cref{fig:COdiffmapEVecs}, we illustrate the three leading eigenvectors for the diffusion map matrix $S_{\alpha, \epsilon}$. We note that apart from the $\lambda = 1$ eigenvector corresponding to the stationary distribution of the diffusion operator, the other eigenvectors do not respect the symmetries of the point cloud.  Depending on the scale parameter $\epsilon$, the $\lambda = 1$ eigenvector distinguishes certain components of the configuration from another: for example, for $\epsilon = 0.5$, the two circles along which the Klein bottle and sphere meet are highlighted as extrema of the eigenvector; however, for $\epsilon = 0.75$, the eigenvector only distinguishes the spherical end caps from the rest of the point cloud, while for $\epsilon = 1.0$, the eigenvector only distinguishes the equatorial belt of the sphere from the rest of the space. In contrast, the power spectra of vertex indicators not only segment different parts of the space effectively, they also respect the (approximate) symmetries of the point cloud. 

\subsection{Graph Regression}\label{subsec:graphregression}
Graph neural networks are a paradigm for machine learning on graphs that relies on aggregating vertex features along the graph to distinguish graphs by their vertex features and structure. We investigate the effect of using the power spectrum signatures at vertices as additional vertex features to enhance the performance of graph neural networks in graph regression tasks. 

In many applications the leading eigenvectors of graph Laplacians are often used to provide additional vertex features; this is often known as a ``positional encoding'' of vertices. We refer the reader to~\cite{Dwivedi2023BenchmarkingNetworks} for a systematic study of incorporating eigenvector features on machine learning problems for graphs. However, the use of eigenvectors as positional encoding suffers from several issues. Eigenvectors are subjective to a choice of sign or basis, and they need not be invariant under the automorphisms of the graph. Furthermore, the eigenvectors are decoupled from their eigenvalues, which remove information about the ``energy'' of the eigenvectors on the graph. Finally, the eigenvectors are often truncated to only the leading few $k$ eigenvectors, since the inclusion of all eigenvectors may lead to additional computational burden and produce vertex features that are mutually orthogonal. These make using Laplacian eigenvectors practically difficult if applied naively as positional encodings. Solutions to the basis choice issue include flipping the sign on the eigenvectors randomly in the training or using sign or basis invariant networks such as SignNet or BasisNet~\cite{Lim2022SignLearning}. Even so, the fully basis invariant BasisNet incurs significant computational burden as the input vector size corresponding to each eigenspace corresponds to the full $n \times n$ projection matrix.  Moreover, due to \Cref{fact:splitting}, the stability of the projection matrices to perturbation depends on the inverse of the eigengap, which can be arbitrarily small (see \Cref{rmk:projection_stability} for detailed comments). 

We investigate how using the power spectrum signature (shortened PSS) of vertices, instead of the Laplacian eigenmap positional encoding, can solve these issues. First, the power spectrum signatures are invariant with respect to choices of eigenbasis. Second, they take the eigenvalues associated to each eigenvector into account. Third, they include information from all eigenspaces, without any truncation. The only loss of information compared to Laplacian eigenmaps is the information attached to the relative sign between entries of the same eigenvector. 

In our experiments, we consider how power spectrum signatures can help lower the error on two graph regression problems. We consider two benchmark graph regression problems ZINC and AQSOL, and compared the performances of three graph neural network architectures GIN~\cite{Xu2018HowNetworks} (the graph isomorphism network), GATv2~\cite{Brody2021HowNetworks} (the graph attention network), and the GatedGCN~\cite{Bresson2017ResidualConvNets} (the gated graph convolution network) with and without the additional power spectrum signatures at vertices. 

\subsubsection{Dataset description}
Both ZINC and AQSOL benchmark datasets consist of molecular graphs with node labels indicating the type of atom, and the edge labels indicating the type of bonds. 

The ZINC benchmark dataset consists of 12,000 molecular graphs. The full dataset introduced by~\cite{Irwin2012ZINC:Biology} contains 250,000 compounds but we use the subset used in~\cite{Fey2019FastGeometric}. The objective of the benchmark problem is to predict the constrained solubility of the molecules. The dataset is split into training, validation, and test sets consisting of 10,000, 1000, and 1000 graphs each. 

The AQSOL benchmark dataset introduced in~\cite{Dwivedi2023BenchmarkingNetworks} consists of 9,982 molecular graphs, extracted from the dataset presented in~\cite{Sorkun2019AqSolDBCompounds}. The objective of the benchmark problem is to predict the aqueous solubility of the molecules. The dataset is split into training, validation, and test sets consisting of 7,831, 966, and 996 graphs each.

The train, validation, and test splits are as used in~\cite{Dwivedi2023BenchmarkingNetworks}, and encoded in the dataset generators in~\cite{Fey2019FastGeometric}.

\subsection{GNN Architecture}
In all experiments, the graphs are passed through four graph convolutional layers before the vertex features are pooled to yield a regression prediction. We consider three distinct graph convolution layers: GIN, GATv2, and GatedGCN. The neural network architectures and training procedures are based on those used in the benchmark experiments conducted in~\cite{Dwivedi2023BenchmarkingNetworks} as a benchmark. However, we differ from their experiments in also incorporating edge features using the implementations of those convolution layers in PyTorch Geometric~\cite{Fey2019FastGeometric}. We give details about the hyperparameters of the models in each experiment, as well as the training procedure in \Cref{app:GNN}.

For both datasets, we use the normalized Laplacian with unit edge weights to generate the vertex power spectra. For AQSOL, we sampled 25 quantiles evenly for each vertex power spectrum as its quantile vector. For ZINC, we sampled 30 quantiles evenly. These are incorporated as additional vertex features on top of the vertex labels. The architecture specifications and training process can be found in~\Cref{app:GNN}.

\subsection{Experimental Results }
In \Cref{tab:AQSOL,tab:ZINC}, we show the mean absolute error training, validation, and test sets with and without the additional power spectrum signatures at vertices. Across all three architectures, and both benchmark problems, we observe that the test error is decreased after the additional features are incorporated. We notice that the ordering of the performance between the three graph neural network datasets does not change even after the addition of the extra features for both problems. We also note that the efficacy of the extra features is certainly problem and dataset-dependent; the addition of such features only decreased the error by 9\% difference for AQSOL, while we see a reduction of 50\% in ZINC. Since the relative decrease of the mean absolute error is consistent across different architectures, we conjecture that spectral features add significant information to
those learned by graph neural networks for predicting the constrained solubility of the compounds in the ZINC dataset, while their contribution is marginal on top of that of the graph neural network for AQSOL. 
\begin{table}[h]
\centering
\begin{tabular}{|l|| c   c |  c  c| c  c|}
\hline
AQSOL & GIN   & with PSS & GAT   & with PSS & GatedGCN   & with PSS  \\ \hline\hline
Train & \texttt{0.693} & \texttt{0.503}& \texttt{0.497} & \texttt{0.302}  & \texttt{0.462} & \texttt{0.384}    \\ \hline
Valid & \texttt{1.073} & \texttt{1.048}     & \texttt{1.171} & \texttt{0.989} & \texttt{\underline{1.001}} & \texttt{\emph{0.914}} \\ \hline\hline
Test  & \texttt{1.602} & \texttt{1.530} & \texttt{1.382} & \texttt{1.145}  & \texttt{\underline{1.231}} & \texttt{\emph{1.117}}\\ \hline
\end{tabular}
\vspace{10pt}
\caption{Comparison of mean absolute error on the AQSOL regression benchmark with and without additional vertex features. We show the validation error on the training, validation, and test subsets. The architecture with the best validation error without PSS is underlined, along with the corresponding test error; the architecture with the best validation error with PSS is italicized, along with the corresponding test error.  \label{tab:AQSOL}}
\end{table}

\begin{table}[h]

\centering
\begin{tabular}{|l|| c   c |  c  c| c  c|}
\hline

ZINC & GIN   & with PSS & GAT   & with PSS & GatedGCN   & with PSS  \\ \hline\hline
Train & \texttt{0.2441} & \texttt{0.0950}    & \texttt{0.2636} & \texttt{0.0853} & \texttt{0.2792} & \texttt{0.0988}     \\ \hline
Valid & \texttt{\underline{0.3044}} & \texttt{\emph{0.1979}}    & \texttt{0.4146} & \texttt{0.2616}  & \texttt{0.3531} & \texttt{0.2591}   \\ \hline\hline
Test  & \texttt{\underline{0.3133}} & \texttt{\emph{0.1577}}    & \texttt{0.4267} & \texttt{0.2431} & \texttt{0.3651} & \texttt{0.2258}     \\ \hline
\end{tabular}
\vspace{10pt}
\caption{Comparison of mean absolute error on the ZINC regression benchmark with and without additional vertex features. The architecture with the best validation error without PSS is underlined, along with the corresponding test error; the architecture with the best validation error with PSS is italicized, along with the corresponding test error.  \label{tab:ZINC}}
\end{table}


\section{Conclusion and Future Work}

In this paper, we introduced the power spectrum signature, a novel point signature based on the spectral decomposition of the graph Laplacian, and demonstrated its many desirable properties. These include permutation invariance and stability under perturbations, particularly when using the Wasserstein distance as a discriminating metric--even in cases of degenerate graph Laplacian spectra. We showcased the effectiveness of power spectra as data features in both unsupervised (clustering on point clouds) and supervised (graph regression) learning. In both settings, we demonstrate the novel utility of our method: in point-cloud clustering, the power spectrum can be used to identify points that are related by approximate isometries and distinguish manifold-like regions from singular regions, while in benchmark graph regression problems, including power spectrum signatures of vertices as additional features demonstrably improves the accuracies of graph neural network models. 

Looking ahead, there are several promising directions for future exploration. From a theoretical point of view, we hope to describe the bound described in \Cref{thm:Lipschiz} in a probabilistic setting. For example, if the perturbation to the matrix is a random Hermitian matrix, then we anticipate that the expected size (or confidence intervals) of the perturbation can be bounded in terms of the parameters and properties of the matrix distribution. Focussing especially on perturbations to graph Laplacians, we hope to obtain similar results on the expected perturbation when the underlying graph is perturbed by random fluctuations in edge weights, or random additions and deletions of edges. Investigating these probabilistic settings allows us to refine our understanding of the stability of power spectrum measures in situations closer to real-world applications. 

There are also further improvements that could be explored to make power spectrum signatures more useful in applications. While we have restricted ourselves to using the power spectrum signature as vertex features, as future work, we can consider the power spectrum of other functions to describe different features of the graphs. Given its role in the injectivity result~\Cref{prop:injectivity}, indicator functions on pairs of vertices can also be used to characterize pairwise relationships between vertices. Other operators, such as the edge Laplacian, can also be used to generate a power spectrum signature associated with edges to further enhance the graph neural networks performance. 

Additionally, although quantile functions have advantageous mathematical properties as they approximate the Wasserstein distance, there may be other more efficient ways to represent features learned from power spectra. In particular, when the size of the graph becomes large and the number of eigenvalues grows along with it, we need more quantile samples to capture the shape of the power spectrum. For example, we can explore using path signatures~\cite{Lyons2007DifferentialPaths,Lyons2022SignatureLearning} to obtain a high-level description of the shape of the quantile function as a path in $\R^2$.

\section*{Acknowledgements}
The authors would like to thank Marco Marletta and Iveta Semoradova for their advice on matrix perturbation theory, and John Harvey and Raphael Winter for their helpful thoughts when we discussed this subject. KYD was partially supported by NSF LEAPS-MPS \#2232344.  KMY is supported by a UKRI Future Leaders Fellowship [grant number MR/W01176X/1; PI: J. Harvey].



\bibliographystyle{plain}
\bibliography{main_ArXiv.bib}



\appendix

\section{Perturbation Analysis Lemmas}\label{app:perturbationlemmas}


\begin{lemma} \label{lem:projection_series_first_k}Consider a Hermitian linear perturbation $H(t) = H + t \Delta$. The first order correction term to $P_{[k]}$, the projection onto the first $k$ eigenspaces of $H(t)$, can be expressed as
\begin{align}\label{eq:}
   \fo{P_{[k]}} :=  \sum_{h=1}^k \fo{P_h} =  \sum_{h \leq k}\sum_{j > k}\frac{1}{\lambda_h - \lambda_j} (P_h \Delta P_j + P_j \Delta P_h).
\end{align}
\end{lemma}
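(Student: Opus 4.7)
The plan is to derive the stated formula by direct substitution of the first-order correction formulas for the individual projections and then exploit an antisymmetry to collapse the sum down to the ``cross'' terms between indices in $\{1,\dots,k\}$ and indices in $\{k+1,\dots,n\}$.

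First I would invoke \Cref{fact:splitting} to write, for each $h \in \{1,\dots,k\}$,
\[
\fo{P_h} = -\bigl(P_h \Delta S_h + S_h \Delta P_h\bigr), \qquad S_h = \sum_{j \neq h}\frac{1}{\lambda_j - \lambda_h} P_j,
\]
and substitute the expression for $S_h$ to obtain
\[
\fo{P_h} = \sum_{j \neq h} \frac{1}{\lambda_h - \lambda_j}\bigl(P_h \Delta P_j + P_j \Delta P_h\bigr).
\]
Summing over $h = 1, \ldots, k$ gives
\[
\fo{P_{[k]}} = \sum_{h=1}^k \sum_{\substack{j=1 \\ j \neq h}}^{n} \frac{1}{\lambda_h - \lambda_j}\bigl(P_h \Delta P_j + P_j \Delta P_h\bigr).
\]

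The main step of the argument is to split this double sum according to whether $j \leq k$ or $j > k$, and to show that the ``internal'' portion where both $h,j \leq k$ vanishes. For the internal portion,
\[
\Sigma_{\mathrm{int}} := \sum_{h=1}^k \sum_{\substack{j=1 \\ j \neq h}}^{k} \frac{1}{\lambda_h - \lambda_j}\bigl(P_h \Delta P_j + P_j \Delta P_h\bigr),
\]
I would relabel by swapping $h \leftrightarrow j$; this keeps the summation region invariant (since both indices range over $\{1,\dots,k\}$ with $h \neq j$), leaves the symmetric bracket $P_h \Delta P_j + P_j \Delta P_h$ unchanged, but flips the sign of the prefactor $(\lambda_h - \lambda_j)^{-1}$. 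Therefore $\Sigma_{\mathrm{int}} = -\Sigma_{\mathrm{int}}$, so $\Sigma_{\mathrm{int}} = 0$.

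What remains is the ``external'' portion with $h \leq k$ and $j > k$:
\[
\fo{P_{[k]}} = \sum_{h=1}^{k}\sum_{j=k+1}^{n}\frac{1}{\lambda_h - \lambda_j}\bigl(P_h \Delta P_j + P_j \Delta P_h\bigr),
\]
which is exactly the claimed expression. I do not anticipate any real obstacle beyond being careful with the index bookkeeping in the antisymmetrization step; the key conceptual observation is simply that the cross-terms within the invariant subspace $\bigoplus_{h \leq k} \mathcal{V}_h$ contribute nothing to the variation of the total projector onto that subspace, which is consistent with the intuition that only the flow of weight between the ``inside'' eigenspaces and the ``outside'' eigenspaces can change $P_{[k]}$ to first order.
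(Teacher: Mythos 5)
Your proof is correct, and it takes a genuinely different route from the paper's. Both arguments start from the same place---substituting the reduced resolvent $S_h = \sum_{j\neq h}\frac{1}{\lambda_j-\lambda_h}P_j$ into $\fo{P_h} = -(P_h\Delta S_h + S_h\Delta P_h)$ from \Cref{fact:splitting} to get $\fo{P_h} = \sum_{j\neq h}\frac{1}{\lambda_h-\lambda_j}(P_h\Delta P_j + P_j\Delta P_h)$---but from there the paper proceeds by induction on $k$, splitting off the $j = k+1$ terms at each inductive step, whereas you sum over $h \le k$ directly and kill the ``internal'' block $\{h,j \le k,\ h\neq j\}$ by the antisymmetry of $(\lambda_h-\lambda_j)^{-1}$ under $h \leftrightarrow j$ against the symmetry of the bracket $P_h\Delta P_j + P_j\Delta P_h$. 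Your cancellation step is sound (the summation region is symmetric and the distinct eigenvalues make all denominators nonzero), and it arguably buys more: it is shorter, avoids the inductive bookkeeping, and makes transparent the structural reason for the formula---only coupling between the first $k$ eigenspaces and their complement moves the total projector $P_{[k]}$ at first order---which the paper's induction verifies but does not explain. The paper's approach, on the other hand, keeps each step an elementary regrouping of finitely many terms and avoids any appeal to a symmetry argument, which some readers may find easier to check line by line.
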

\begin{proof}
   We prove this statement by induction. For the base case $k = 1$, it suffices to directly substitute the definition of the resolvent $S_1$ \cref{eq:resolvent} into the first order perturbation to the projection matrix \cref{eq:projection_fo}: 
   \begin{align*}
       -\fo{P_1} &= P_1\Delta S_1 + S_1 \Delta P_1 =\sum_{j > 1} \frac{1}{\lambda_j - \lambda_1} P_1 \Delta P_j+  \sum_{j > 1} \frac{1}{\lambda_j - \lambda_1} P_j \Delta P_1,
        \intertext{so,}
         \fo{P_1}&= \sum_{h \leq 1} \sum_{j > 1} \frac{1}{\lambda_h - \lambda_j} (P_h \Delta P_j + P_j \Delta P_h). 
   \end{align*}
   Having shown that this is true for the base case, let's assume the statement is true for $\fo{P_k}$ for some $k \geq 1$, and derive the expression for $\fo{P_{[k+1]}} = \fo{P_{[k]}} + \fo{P_{k+1}}$. As a shorthand let us denote $\Delta(i,j) = \Delta(j,i) := P_i \Delta P_j + P_j \Delta P_i$. We first note the following
   \begin{align*}
   \fo{P_{k+1}} &= \sum_{j \neq k+1} \frac{\Delta(k+1,j)}{\lambda_{k+1} - \lambda_j}  =  \sum_{h \leq k} \frac{\Delta(k+1,h)}{\lambda_{k+1} - \lambda_h}  +  \sum_{j > k+1} \frac{\Delta(k+1,j)}{\lambda_{k+1} - \lambda_j}  =  -\sum_{h \leq k} \frac{\Delta(h,k+1)}{\lambda_{h} - \lambda_{k+1}}  +  \sum_{j > k+1} \frac{\Delta(k+1,j)}{\lambda_{k+1} - \lambda_j}. 
   \end{align*}
   Similarly, we can also split up the sum in $\fo{P_{[k]}}$:
   \begin{align*}
       \fo{P_{[k]}} =  \sum_{h \leq k}\sum_{j > k}\frac{\Delta(h,j)}{\lambda_h - \lambda_j}  = \sum_{h \leq k}\frac{\Delta(h,k+1)}{\lambda_h - \lambda_{k+1}} + \sum_{h \leq k}\sum_{j > k+1}\frac{\Delta(h,j)}{\lambda_h - \lambda_j}.
   \end{align*}
   Thus we can write 
   \begin{align*}
       \fo{P_{[k+1]}} &= \fo{P_{[k]}} + \fo{P_{k+1}} = \sum_{h \leq k}\sum_{j > k+1}\frac{\Delta(h,j)}{\lambda_h - \lambda_j} +  \sum_{j > k+1} \frac{\Delta(k+1,j)}{\lambda_{k+1} - \lambda_j} \\
       &= \sum_{h \leq k+1}\sum_{j > k+1}\frac{\Delta(h,j)}{\lambda_h - \lambda_j}.
   \end{align*}
   This completes the inductive step.
\end{proof}

\begin{remark}
    The previous result can also be derived from first principles by taking contour integrals of the resolvent, along the same lines where the perturbative series for the projections onto the individual eigenspaces were derived in~\cite[section II.2.1]{Kato1995PerturbationOperators}. We take the approach above for simplicity of the exposition. 
\end{remark}
\begin{lemma} \label{lem:cancellation_first_order} Let $A \in \C^{n \times n}$, and $\lambda_1, \ldots, \lambda_n$ be real numbers, then
    \begin{align}
        \sum_{k=1}^{n-1} (\lambda_{k+1} - \lambda_k)\sum_{1 \leq h  \leq k < j \leq n} A_{hj} = \sum_{1 \leq h < j \leq n}(\lambda_j - \lambda_h)A_{hj}.
    \end{align}
\end{lemma}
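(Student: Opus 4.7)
The plan is to prove the identity by swapping the order of summation on the left-hand side and then recognizing a telescoping sum. The left-hand side is a double sum indexed by $k$ (outer) and by pairs $(h,j)$ with $h \leq k < j$ (inner). Upon swapping, for each fixed pair $(h,j)$ with $1 \leq h < j \leq n$, I would determine the set of $k \in \{1, \ldots, n-1\}$ that satisfy $h \leq k < j$. This set is precisely $\{h, h+1, \ldots, j-1\}$, so after the swap, $A_{hj}$ collects the coefficient $\sum_{k=h}^{j-1}(\lambda_{k+1} - \lambda_k)$.

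The second step is to recognize that this coefficient telescopes: $\sum_{k=h}^{j-1}(\lambda_{k+1} - \lambda_k) = \lambda_j - \lambda_h$. Substituting this back yields precisely the right-hand side $\sum_{1 \leq h < j \leq n}(\lambda_j - \lambda_h) A_{hj}$.

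There is essentially no obstacle here; the statement is a purely combinatorial reindexing identity. The only point that requires a moment of care is checking the index bounds to ensure the swap is valid. Specifically, one should verify that the predicate $1 \leq h \leq k < j \leq n$ together with $1 \leq k \leq n-1$ is equivalent to the predicate $1 \leq h < j \leq n$ together with $h \leq k \leq j-1$ — this holds because $h \leq k$ forces $k \geq 1$, and $k < j \leq n$ forces $k \leq n-1$, so the outer constraint $k \in \{1, \ldots, n-1\}$ is automatically enforced by the inner constraints. Given this, the Fubini-style reindexing is valid, and the telescoping step completes the proof in one line.
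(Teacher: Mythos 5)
Your proposal is correct and matches the paper's proof: both swap the order of summation so each $A_{hj}$ collects the coefficient $\sum_{h \leq k < j}(\lambda_{k+1}-\lambda_k)$, which telescopes to $\lambda_j - \lambda_h$. Your extra check of the index bounds is a fine elaboration but does not change the argument.
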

\begin{proof}
    The sum consists of a linear combination of upper triangular entries $A_{hj}$ of the matrix $A$. Each coefficient of $A_{hj}$ consists of the sum over $(\lambda_{k+1}- \lambda_k)$ for every $h \leq k < j$. Therefore, 
    \begin{align*}
        \sum_{k=1}^{n-1} (\lambda_{k+1} - \lambda_k)\sum_{1 \leq h  \leq k < j \leq n} A_{hj} &= \sum_{1 \leq h < j \leq n} A_{hj} \sum_{h  \leq k < j }  (\lambda_{k+1} - \lambda_k) \\
        &= \sum_{1 \leq h < j \leq n}(\lambda_j - \lambda_h)A_{hj}.
    \end{align*}
\end{proof}

\begin{lemma}\label{lem:local_to_global_bound} Let $X$ be a Banach space over the field $\R$ or $\C$, and assume $f: X \to Y$ is a function from $X$ to a metric space $(Y, \dd_Y)$. If for any $x \in X$, there is a sufficiently small $r(x) > 0$ such that for all $x' \in \openball{x}{r(x)}$
\begin{equation}
    \dd_Y(f(x), f(x')) \leq K \norm{x-x'} + \order{\norm{x-x'}^2}, \label{eq:almost_local_lip}
\end{equation}
then $f$ is a $K$-Lipschitz function: i.e., for any pair of points $x,x'$,
\begin{equation}
    \dd_Y(f(x), f(x')) \leq K \norm{x-x'}.
\end{equation}
\end{lemma}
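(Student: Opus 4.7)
The claim is a standard ``promotion from infinitesimal to global'' result: a Banach space is geodesic (in fact, the straight segment is a length-minimizing curve), and the hypothesis says $f$ has an infinitesimal Lipschitz constant bounded by $K$ everywhere. I will use a continuity/supremum argument along the segment $\gamma(t) = x + t(x'-x)$, $t \in [0,1]$, connecting two arbitrary points $x,x' \in X$. First I interpret the $\order{\norm{x-x'}^2}$ term in \cref{eq:almost_local_lip} concretely: for each $x \in X$ there exist $r(x) > 0$ and $C(x) > 0$ such that $\dd_Y(f(x), f(x')) \leq K \norm{x-x'} + C(x)\norm{x-x'}^2$ whenever $\norm{x-x'} < r(x)$. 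In particular $f$ is continuous at $x$, hence $f \circ \gamma : [0,1] \to Y$ is continuous.

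Fix an arbitrary $\epsilon > 0$ and set $L := \norm{x'-x}$. Define
\begin{equation*}
    T := \sup\bigl\{ s \in [0,1] : \dd_Y(f(x), f(\gamma(s))) \leq (K+\epsilon) s L \bigr\}.
\end{equation*}
The set is non-empty because it contains $s=0$, and by continuity of $f \circ \gamma$ the defining inequality is closed in $s$, so the supremum is attained, i.e.\ $\dd_Y(f(x), f(\gamma(T))) \leq (K+\epsilon)TL$. The goal is to show $T = 1$.

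Suppose, for contradiction, that $T < 1$. Write $y := \gamma(T)$ and apply the local hypothesis at $y$: for all sufficiently small $s > 0$, the point $\gamma(T+s) = y + sL \cdot u$ (where $u = (x'-x)/L$) satisfies $\norm{\gamma(T+s)-y} = sL < r(y)$, so
\begin{equation*}
    \dd_Y(f(y), f(\gamma(T+s))) \leq K s L + C(y)(sL)^2 = \bigl(K + C(y) s L \bigr) s L.
\end{equation*}
Shrink $s$ further so that $C(y) s L \leq \epsilon$; combined with the triangle inequality and the bound at $y$,
\begin{equation*}
    \dd_Y(f(x), f(\gamma(T+s))) \leq (K+\epsilon)TL + (K+\epsilon) s L = (K+\epsilon)(T+s)L,
\end{equation*}
contradicting the definition of $T$. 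Hence $T=1$, which gives $\dd_Y(f(x), f(x')) \leq (K+\epsilon)L$. Since $\epsilon > 0$ was arbitrary, $\dd_Y(f(x), f(x')) \leq K \norm{x-x'}$.

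\textbf{Main obstacle.} The only subtle point is the interpretation and uniform handling of the $\order{\norm{x-x'}^2}$ term: the constant $C(x)$ hidden in the little-$o$ may depend on the base point, so one cannot uniformly control it over the whole segment a priori. The supremum argument above sidesteps this difficulty elegantly because it only uses the local estimate at the single point $y = \gamma(T)$, where $C(y)$ is a fixed finite number that can be absorbed by shrinking $s$. An alternative route via a finite partition of the segment (using compactness of $\gamma([0,1])$ and a Lebesgue-number argument on the cover $\{B(\gamma(t), r(\gamma(t)))\}$) would also work, but the supremum/connectedness argument is shorter and avoids keeping track of the interaction between the partition size and the pointwise constants $C(\gamma(t))$.
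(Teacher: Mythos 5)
Your proof is correct, but it takes a genuinely different route from the paper. The paper proves the lemma by a compactness/chaining argument: after converting the hypothesis \cref{eq:almost_local_lip} into a clean $(K+\epsilon)$-bound on a small ball around each point, it covers the parameter interval $[0,1]$ of the segment by the pulled-back balls, extracts a finite subcover, selects overlap parameters $s_i$ between consecutive balls, and sums the triangle inequality along the resulting chain, using that linear interpolation is a geodesic so the incremental lengths add up to $\norm{x_1-x_0}$; finally $\epsilon \to 0$. You instead run a continuous-induction (supremum/connectedness) argument: the set of $s$ for which the $(K+\epsilon)s L$ bound from the base point holds is closed (continuity of $f\circ\gamma$, which indeed follows from the local hypothesis), its supremum $T$ is attained, and the local estimate at the single point $\gamma(T)$ — with its one fixed constant $C(\gamma(T))$ absorbed into $\epsilon$ by shrinking $s$ — pushes past $T$ unless $T=1$. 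Both arguments use the same geodesic additivity of distances along the segment, but yours needs the hidden big-O constant only at one point at a time and avoids the mildly delicate step in the paper of choosing the intermediate parameters $s_i$ in the intersections of consecutive balls of the finite subcover; the paper's cover-and-chain version is the one that generalizes most directly to arbitrary rectifiable curves or length spaces. One cosmetic remark: you should note separately the trivial case $x=x'$ (your unit vector $u=(x'-x)/L$ is undefined when $L=0$), but this does not affect the substance of the argument.
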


\begin{proof}
For any $\epsilon >0$, the condition expressed in \cref{eq:almost_local_lip} implies for any $x \in X$, there is a sufficiently small radius $\rho(x) \in (0, r(x))$, such that  for all $x' \in \openball{x}{\rho(x)}$
    \begin{equation}
    \dd_Y(f(x), f(x')) \leq (K + \epsilon) \norm{x-x'}. \label{eq:almost_local_lipb}
\end{equation}
We integrate this local condition into a global Lipschitz condition as follows. For any pair of distinct vectors $x_0,x_1 \in X$, we consider its linear interpolation 
    \begin{align*}
        x_t = x_0(1-t) + tx_1, \quad t \in [0,1].
    \end{align*}
We recall that linear interpolation is a geodesic in Banach spaces; that is, for any $t \in [0,1]$, we have
    \begin{align*}
        \norm{x_1 - x_0} = \norm{x_1 - x_t} + \norm{x_t - x_0}.
    \end{align*}
In particular, we note that the distance along linear interpolations is given by 
    \begin{align*}
        \norm{x_t - x_s} = \norm{x_0(1-t) + tx_1 - x_0(1-s) - sx_1} = (t-s) \norm{x_1 - x_0}.
    \end{align*}
Let $\bar{\rho}_t = \frac{\rho(x_t)}{\norm{x_1- x_0}}$. The $\openball{t}{\bar{\rho}_t} \subset [0,1]$ is the set of parameters along the interpolation such that $s \in \openball{t}{\bar{\rho}_t}$  iff $x_s \in \openball{t}{\rho(t)}$. We consider the following open cover of the interval 
    \begin{align*}
        \mathcal{U} = \{\openball{t}{\bar{\rho}_t)} \ | \ t\in [0,1] \}.
    \end{align*}
    Because $[0,1]$ is compact, we can pass to a finite subcover 
    \begin{align*}
        \mathcal{V} = \{\openball{t_i}{\bar{\rho}_{t_i}} \ | \ i =1, \ldots, m  \},
    \end{align*}
     where $0 \leq t_1 < t_2 <  \cdots  < t_m \leq 1$.
    Now consider a choice of parameters 
    \begin{equation*}
        0 = s_0 \leq t_1 \leq s_1 \leq \cdots  \leq s_{m-1} \leq t_m \leq s_m = 1,
    \end{equation*}
    such that $s_i \in \openball{t_{i+1}}{\bar{\rho}_{t_{i+1}}} \cap \openball{t_i}{\bar{\rho}_{t_{i}}}$. It follows then that $\norm{x_{t_{i+1}} - x_{s_{i}}} \leq \rho(x_{t_{i+1}})$ and $\norm{x_{t_{i}} - x_{s_i}} \leq \rho(x_{t_{i}})$. Thus \cref{eq:almost_local_lipb} implies 
    \begin{align*}
        \dd_Y(f(x_{s_i}), f(x_{t_i})) &\leq (K + \epsilon) \norm{x_{s_i}- x_{t_i}} = (K + \epsilon)(s_i - t_i) \norm{x_1 - x_0}; \\
        \dd_Y(f(x_{t_{i+1}}), f(x_{s_i})) &\leq (K + \epsilon) \norm{x_{t_{i+1}} - x_{s_i}} = (K + \epsilon)(t_{i+1} - s_i) \norm{x_1 - x_0}.
    \end{align*}
    We integrate the bounds above by applying the triangle inequality successively:
    \begin{align*}
         \dd_Y(f(x_{0}), f(x_{1}))  &\leq \sum_{i=1}^m \dd_Y(f(x_{s_{i-1}}),f( x_{t_i})) + \dd_Y(f( x_{t_i}), f(x_{s_{i}})) \\
         & \leq \sum_{i=1}^m (K + \epsilon)(t_{i} - s_{i-1}) \norm{x_1 - x_0} +  (K + \epsilon)(s_i - t_i) \norm{x_1 - x_0}  \\
         &=  (K+ \epsilon)\norm{x_1 - x_0}\sum_{i=1}^m s_i - s_{i-1} =  (K+ \epsilon)\norm{x_1 - x_0},
    \end{align*}
    where in the final equality we note that $s_{0} = 0$ and $s_m = 1$. Since we have $\dd_Y(f(x_{0}), f(x_{1})) \leq (K+ \epsilon) \norm{x_1 - x_0}$ for arbitrary $x_0, x_1 \in X$, and any $\epsilon >0$, we must have $\dd_Y(f(x_{0}), f(x_{1})) \leq K\norm{x_1 - x_0}$, as any pair satisfying $\dd_Y(f(x_{0}), f(x_{1})) > K\norm{x_1 - x_0}$ contradicts $\dd_Y(f(x_{0}), f(x_{1})) \leq (K+ \epsilon) \norm{x_1 - x_0}$ for some sufficiently small $\epsilon$.
    
\end{proof}



\section{Diffusion Map Eigenvalues and Eigenvectors}

\begin{figure}[ht]
    \centering
    \includegraphics[width=0.95\linewidth]{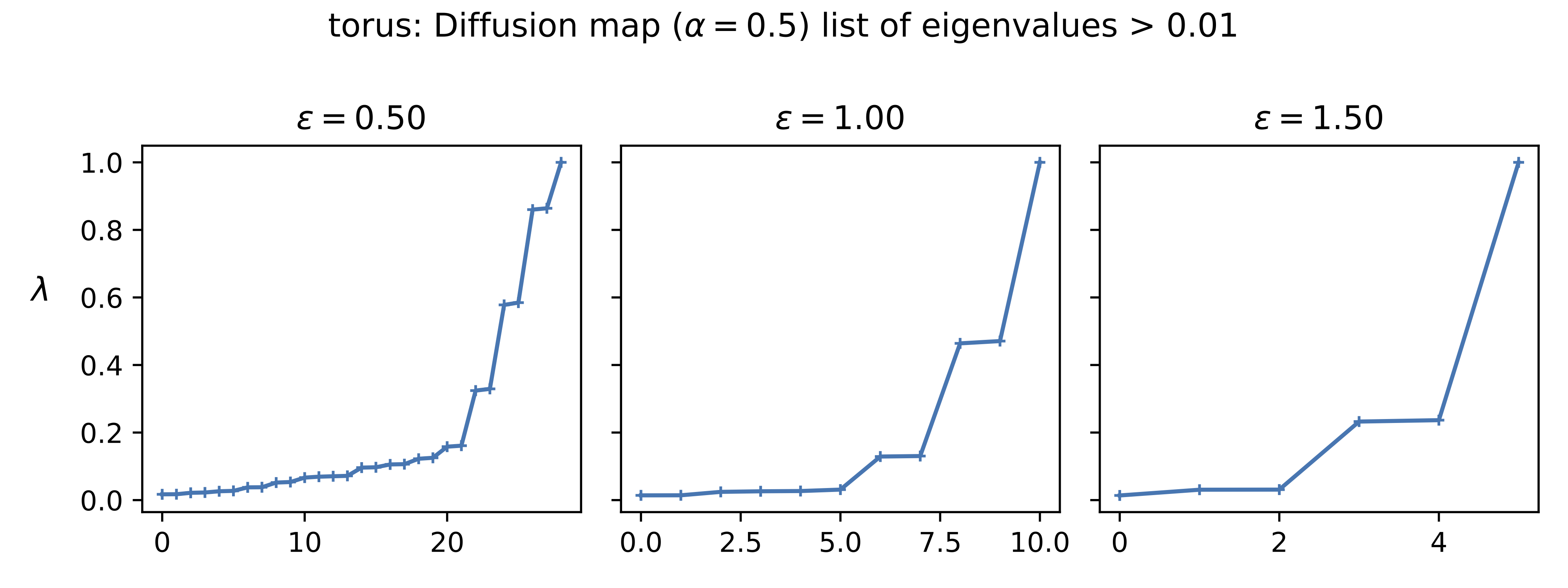}
    \caption{\emph{Torus}. Eigenvalues of the diffusion map matrix $s_{\alpha, \epsilon}$, for $\alpha = 1/2$ and $\epsilon = 0.5, 1.0, 1.5$. We sort the eigenvalues between $[0,1]$ and plot those with $\lambda > 0.01$. 
As our input point cloud has 5000 points, the matrix $S_{\alpha, \epsilon}$ has 5000 (non-distinct) eigenvalues. Note the $x$-axis tallies the number of eigenvalues greater than 0.01, from which we observe that as $\epsilon$ increases, more and more eigenvalues are concentrated near zero. }
    \label{fig:TorusdiffmapEVs}
\end{figure}

\begin{figure}[h!]
    \centering
    \includegraphics[width=0.95\linewidth]{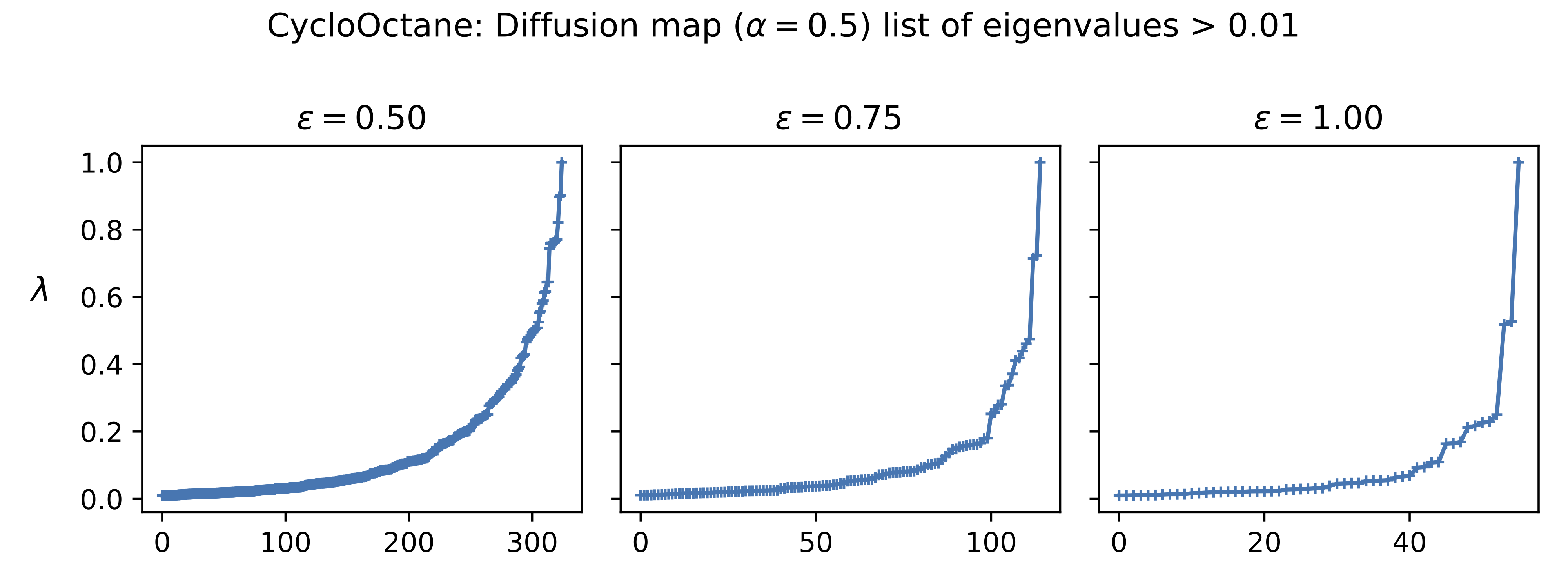}
    \caption{\emph{Cyclo-octane}. Eigenvalues of the diffusion map matrix $S_{\alpha, \epsilon}$, for $\alpha = 1/2$ and $\epsilon = 0.5, 0.75, 1.0$. We sort the eigenvalues between $[0,1]$ and plot those with $\lambda > 0.01$. 
As our input point cloud has 6040 points, the matrix $s_{\alpha, \epsilon}$ has 6040 (non-distinct) eigenvalues. Note the $x$-axis tallies the number of eigenvalues greater than 0.01, from which we observe that as $\epsilon$ increases, more and more eigenvalues are concentrated near zero. }
    \label{fig:COdiffmapEVs}
\end{figure}

\begin{figure}[h!]
    \centering
    \includegraphics[width=0.95\linewidth]{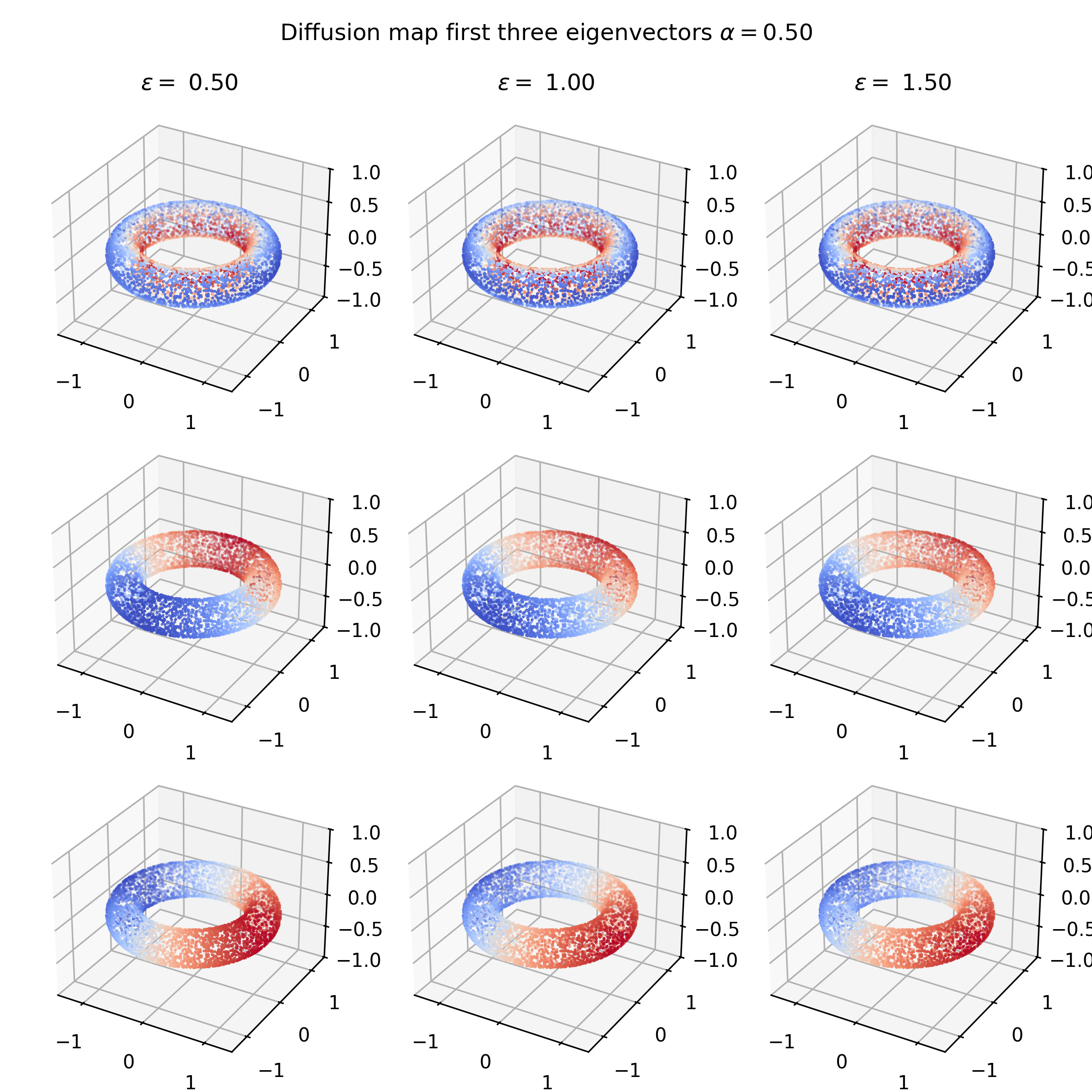}
    \caption{\emph{Torus}. We color the points on the torus by the value of their entries in the eigenvectors of the diffusion map matrix $s_{\alpha, \epsilon}$. We fix $\alpha = 1/2$ and vary $\epsilon = 0.5, 1.0, 1.5$. The values are color-scaled from blue to red. The first row represents the leading eigenvector with $\lambda = 1$, and the second and third rows represent the eigenvectors with the second and third largest eigenvalues. }
    \label{fig:TorusdiffmapEVecs}
\end{figure}

\begin{figure}[h!]
    \centering
    \includegraphics[width=0.95\linewidth]{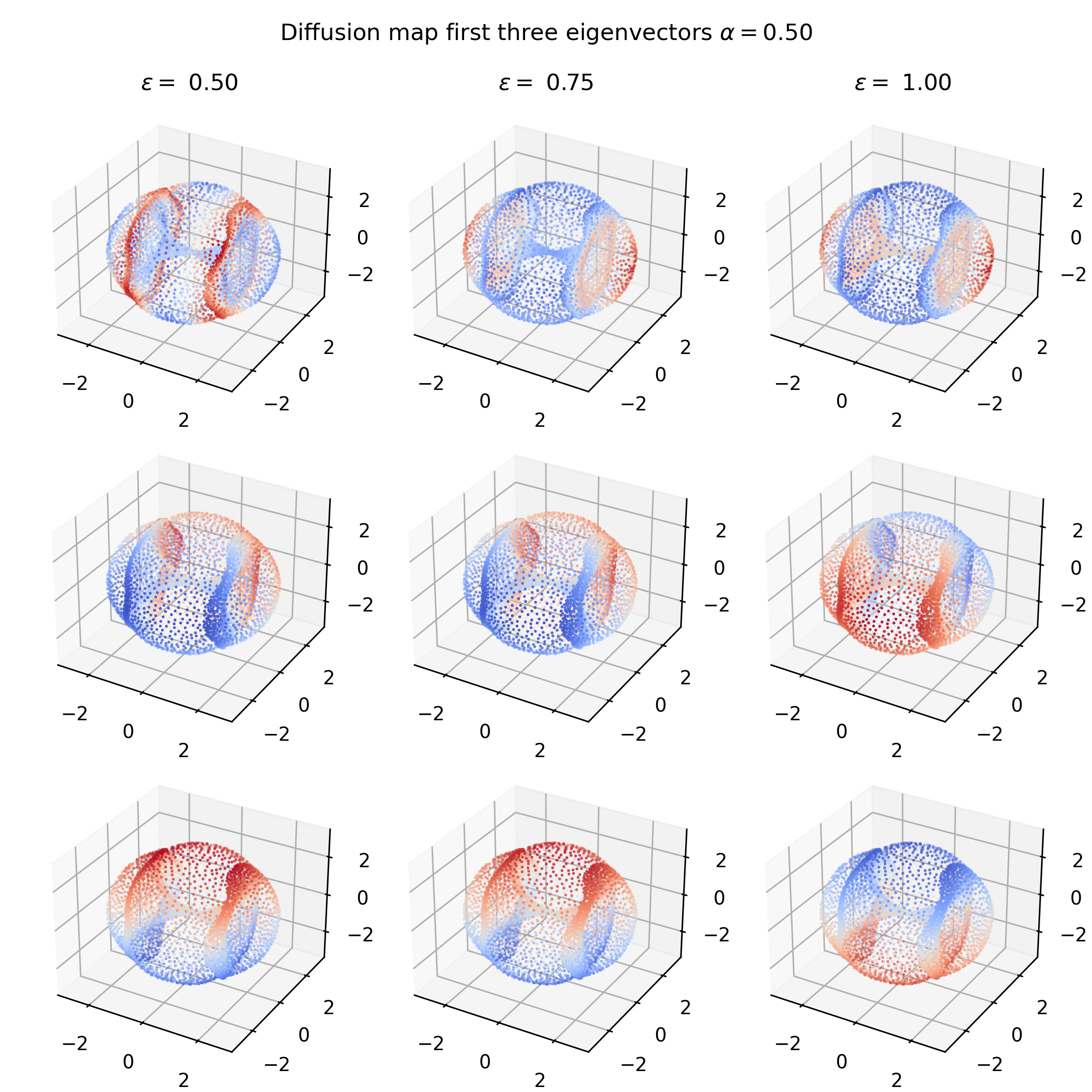}
    \caption{\emph{Cyclo-octane}. We color the points on the cyclo-octane point cloud (here visualized in three dimensions using Isomap) by the value of their entries in the eigenvectors of the diffusion map matrix $s_{\alpha, \epsilon}$. We fix $\alpha = 1/2$ and vary $\epsilon = 0.5, 0.75, 1.0$. The values are color-scaled from blue to red. The first row represents the leading eigenvector with $\lambda = 1$, and the second and third rows represent the eigenvectors with the second and third largest eigenvalues. }
    \label{fig:COdiffmapEVecs}
\end{figure}

\newpage
\section{GNN Experimental Parameters}\label{app:GNN}
All experiments were implemented in PyTorch Geometric~\cite{Fey2019FastGeometric}. The training procedure and the hyperparameters of the graph neural network are based on those in~\cite {Dwivedi2023BenchmarkingNetworks}.

\subsection{Training}
We use the Adam optimizer (with the default settings in PyTorch) to minimize the mean absolute error in the training. The initial learning rate is fixed to be  \texttt{1e-3}. We reduce the learning rate on plateaus, with parameters patience =  \texttt{10} epochs, learning rate reduction factor =  \texttt{0.5}, and we terminate the training when the learning rate drops below \texttt{1e-5}. 

We vary the batch size to be either 32 or 128 in our training. The reported results, which minimize the validation set error, are those with batch sizes as follows. The batch sizes for the trained models reported are all 128 for ZINC, and 32 for AQSOL, apart from the GAT architecture.

\subsection{GNN Architecture}
In all GNN models, several graph convolutional layers are applied before a pooling layer collects the vertex level features into a graph level feature. Then, graph level features are fed into a multilayer perceptron to yield the predicted feature. All hidden convolutional layers have constant dimensions, as do the dense layers in the final multilayer perceptron. The pooling layers either take the mean of the vertex features or the sum. Each hidden layer of the MLP consists of a batchnorm layer, a linear layer, followed by a ReLU activation. 

Since the vertex and edge features in both AQSOL and ZINC are categorical attributes, we use a one-hot encoding to represent each categorical feature as a vector in $\R^n$. The dimension of these feature spaces are denoted as vertex dim and edge dim in the tables below that describe the GNN architectures. 

All convolution layers retain residual connections. To avoid any confusion, the number of layers always refers to hidden layers, and corresponds to either the number of convolutions or non-linearities applied, depending on the context. 

\subsubsection{GAT}
For the GAT graph neural networks, the GAT convolutional layer parameters for the experiments on AQSOL and ZINC are shown in \Cref{tab:GAT_params}. Each GAT layer consists of a batchnorm layer, a GAT convolution layer, followed by the ELU non-linear activation. 

\begin{table}[h]

\centering
\begin{tabular}{|c|c|c|c|c|c|c|c|c|}
\hline
 GAT     & layers & dim & attn. heads & vertex dim &  edge dim & pool & MLP layers &  MLP dim \\ \hline
AQSOL & 4      & 18        & 8               & 144(119)              & 5             & mean & 1 & 144     \\ \hline
ZINC  & 4      & 18        & 8               & 144(114)          & 4             & mean & 1 & 144     \\ \hline
\end{tabular}
\vspace{5pt}
\caption{GAT architecture parameters. The number in brackets indicates the parameters for the case when features from the power spectra are incorporated. If brackets are not shown then the same parameters are used in both the experiments with and without power spectra features.}
\label{tab:GAT_params}
\end{table}

\subsubsection{GIN}
For the GIN graph neural networks, the GIN convolutional layer parameters for the experiments on AQSOL and ZINC are shown in \Cref{tab:GIN_params}. Each GIN layer consists of a batchnorm layer and a GIN convolution layer. Within each GIN layer, there is an MLP that transforms the features. 

In experiments in which power spectra are included as features, a separate MLP is first used to transform each quantile vector before feeding them into the convolution layers. This is because the native implementation of a GIN convolution layer in PyTorch Geometric adds incident edge features directly to vertex features at the initial pre-processing step. This dilutes the effect of the additional power spectra signature features. 

For AQSOL, the MLP has one hidden layer with dimension 45; for ZINC, the MLP has two hidden layers with dimension 30. 

When we trained the model, we varied where the $\epsilon$ parameter in the convolution layers is trainable or fixed to zero. We choose whether it is fixed or trainable based on the error on the validation set. For AQSOL, the $\epsilon$ parameter was trained for the experiment without power spectrum signatures, while the $\epsilon$ parameter fixed to zero yielded a better result when power spectrum signatures were included. For ZINC, the $\epsilon$ parameters were trained in both cases. 

\begin{table}[h]

\centering
\begin{tabular}{|c|c|c|c|c|c|c|c|c|}
\hline
GIN   & layers & dim & GIN MLP & GIN MLP & vertex dim & edge dim & pool & MLP dim \\ 
& & & layers & layer &  & & & \\ \hline
AQSOL & 4      & 110 & 1              & 110         & 110(65)    & 110      & sum  & 0       \\ \hline
ZINC  & 4      & 110  & 1              & 110         & 110(80)    & 110      & sum  & 0       \\ \hline
\end{tabular}
\vspace{5pt}
\caption{GIN architecture parameters. The number in brackets indicates the parameters for the case when features from the power spectra are incorporated. If brackets are not shown then the same parameters are used in both the experiments with and without power spectra features.}
\label{tab:GIN_params}
\end{table}

\subsubsection{GatedGCN}

For the GatedGCN graph neural networks, the GatedGCN convolutional layer parameters for the experiments on AQSOL and ZINC are shown in \Cref{tab:Gated_params}. Each GAT layer consists of a batchnorm layer, and a GatedGCN layer. 

\begin{table}[h!]

\centering
\begin{tabular}{|c|c|c|c|c|c|c|c|}
\hline
GatedGCN & layers & dim & vertex dim & edge dim & pool & MLP layer & MLP dim \\ \hline
AQSOL    & 4      & 90  & 90(65)     & 5        & mean & 1         & 90      \\ \hline
ZINC     & 4      & 70  & 70(40)     & 4        & mean & 0         & 0       \\ \hline
\end{tabular}
\vspace{5pt}
\caption{GIN architecture parameters. The number in brackets indicates the parameters for the case when features from the power spectra are incorporated. If brackets are not shown then the same parameters are used in both the experiments with and without power spectra features.}
\label{tab:Gated_params}
\end{table}

\vfill

\end{document}